\documentclass{article}

\usepackage{microtype}
\usepackage{graphicx}
\usepackage{subcaption}
\usepackage{booktabs} 
\usepackage{makecell}
\usepackage{hyperref}

\usepackage{placeins}
\usepackage{multirow}

\usepackage{etoc}

\usepackage[preprint]{icml2026}

\usepackage{amsmath}
\usepackage{amssymb}
\usepackage{mathtools}
\usepackage{amsthm}
\usepackage{amsfonts}
\usepackage{tcolorbox}
\usepackage{listings}
\usepackage{hyperref}

\usepackage[capitalize,noabbrev]{cleveref}

\theoremstyle{plain}
\newtheorem{theorem}{Theorem}[section]

\theoremstyle{definition}

\theoremstyle{remark}
\newtheorem{remark}[theorem]{Remark}

\makeatletter
\let\old@addtocontentsline\addtocontentsline
\makeatother

\usepackage[textsize=tiny]{todonotes}

\icmltitlerunning{Bug or Feature}

\begin{document}
\twocolumn[
\icmltitle{Bug or Feature$^2$: Weight Drift, Activation Sparsity and Spikes}

\icmlsetsymbol{equal}{*}

\begin{icmlauthorlist}
\icmlauthor{Egor Shvetsov}{equal}
\icmlauthor{Aleksandr Serkov}{equal}
\icmlauthor{Shokorov Viacheslav}{}
\icmlauthor{Redko Dmitry}{}
\icmlauthor{Vladislav Goloshchapov}{}
\icmlauthor{Evgeny Burnaev}{}
\end{icmlauthorlist}


\icmlcorrespondingauthor{Egor Shvetsov}{geksut@gmail.com}


\vskip 0.3in
]

\printAffiliationsAndNotice{\icmlEqualContribution}

\begin{abstract}
The design of modern neural architectures has converged through
incremental empirical choices, yet the mechanisms governing their
training dynamics remain only partially understood. We identify
and analyze a negative weight drift induced by the interaction
between standard losses and positively biased activation functions.
We prove that under MSE or cross-entropy loss, the gradient with
respect to positive pre-activations is non-negative in expectation
at initialization, driving downstream weights toward negative
values during early training. The drift is intrinsic to
optimization rather than data, and persists across architectures
(MLP, ResNet, ViT, GPT-nano, MP-SENe) and asymmetric activation functions
(ReLU, GELU, SiLU). Coupled with ReLU, weight drift produces
activation sparsity reaching up to 90\% in GPT-nano. We
characterize the sparsity-accuracy tradeoff across 79
configurations and identify a sharp accuracy cliff above
$\sim$70\% activation sparsity. While ReLU$^2$ achieves a good
sparsity--accuracy ratio in GPT-nano, it pathologically amplifies identified activation
spikes in intermediate transformer layers. Clipping resolves this
while preserving the representational benefits of squaring:
clipped ReLU$^2$ outperforms its unclipped version, and GELU$^2$
achieves the lowest validation loss on GPT-nano.
Code is available at \url{https://github.com/On-Point-RND/BugOrFeature}.
\end{abstract}

The design of modern  neural architectures resembles an evolutionary process that converges toward stable paradigms. 
Incremental improvements are frequently discovered and adopted in an ad-hoc manner, 
yet we do not fully understand the intrinsic mechanics governing the models we employ. 
In this work, we identify and study a negative drift in weight distributions induced by 
the interaction between standard losses and positively asymmetric activation functions. 
This drift further negatively shifts the mean of intermediate representations. 
Passing these representations through the activation functions squashes them toward zero, 
which in turn reinforces the drift that produced them.

\textbf{Weight drift:} We formally illustrate and empirically verify that for positively biased activation functions combined with 
standard losses (MSE, cross-entropy), gradient descent drives weights toward negative values during the early iterations of training.
We also demonstrate that the shift is largest in the first iterations, 
when the loss is largest, and that the resulting negative offset persists throughout training. 
The effect is intrinsic to the optimization rather than the data: the same drift appears when training on entirely random inputs. 
It also holds broadly across architectures: (MLP, MaxViT~\citep{tu2022maxvit}, GPT-nano~\citep{karpathy2022nanogpt}, ResNet-18~\citep{he2016resnet}), MP-SENet~\cite{lu2023mp} and activation functions (ReLU~\citep{nair2010relu}, SiLU~\citep{elfwing2018silu}, GELU~\citep{hendrycks2016gaussian}, NoisyReLU~\citep{gulcehre2016noisy}, SUGARBSiLU~\citep{horuz2025resurrection}, ReLU$^2$~\citep{so2022primer}).

\textbf{Bug or Feature: Emergent Sparsity.}
In the absence of centering normalization~\footnote{We provide a broader discussion on modern architectures which do not use centering in Appendix~\ref{app:normalizations}.}, the consequence of weight drift depends on the activation function.
With ReLU, negatively shifted pre-activations map exactly to zero, inducing hard activation
sparsity reaching up to 90\% in GPT-nano. With GELU and SiLU, the same drift pushes
activations near zero, causing low-magnitude outputs to dominate
intermediate representations. In both regimes, this raises an immediate question: is this a
\textbf{Bug or a Feature}? As a bug, uncontrolled sparsity or magnitude suppression risks
degrading model performance by silencing large portions of the network. As a feature, this
emergent effect arising without any explicit regularization could be a compelling mechanism
for computational efficiency or improved interpretability. The answer depends on whether this
suppression hard or soft hurts model performance.

\textbf{Taking control of sparsity.} To answer the question above we  control sparsity levels and analyze its interplay with model performance. Pre-activation normalization with centering locks sparsity at fixed, predictable
levels, and percentile shifting before ReLU offers a direct strategy for tuning the sparsity level deliberately.
We benchmark this natural sparsity against Top-K sparsity~\citep{top_k_sae} as a strong explicit baseline, and
investigate how different sparsity levels affect downstream performance. 

\textbf{The sparsity accuracy tradeoff across activation functions.}
Having established that weight drift and normalization jointly govern
sparsity in ReLU-based models, we ask whether alternative activation
functions offer a more favorable sparsity--accuracy tradeoff. We
evaluate $\text{ReLU}^2$~\citep{so2022primer}, NoisyReLU~\citep{gulcehre2016noisy}, and SUGARBSiLU~\citep{horuz2025resurrection} as candidates
that may simultaneously enhance sparsity and model performance. We
find that $\text{ReLU}^2$ is highly sensitive to normalization choice,
functioning effectively only with LayerNorm~\citep{ba2016layer} and RMSNorm~\citep{zhang2019root}, while coupling
it with BatchNorm or no normalization degrades performance. 

\textbf{Clipped $\text{ReLU}^2$ and GELU$^2$ improve GPT-nano pre-training.}
In GPT-nano models we identify a significant spike in
maximum activations in the 2nd and 3rd layers a phenomenon
substantially amplified by $\text{ReLU}^2$. The same question
resurfaces for the second time: is this signal amplification a
\textbf{Bug or a Feature}$^2$? We find that the spike is the bug, but the squared nonlinearity is the feature. Clipping tames the
activation instability while preserving the representational benefits of the squared function.
Concretely, clipped $\text{ReLU}^2$ and clipped $\text{GELU}^2$ both outperform their non-squared
counterparts, with clipped $\text{ReLU}^2$ yielding the strongest results overall. 

\textbf{An efficiency bonus.}
Finally, the fact that the weight drift happens only during first iterations yields a practical dividend. Since the critical
dynamics stabilize after only a few iterations, centering statistics, quantile shifts, and Top-K
thresholds can all be computed as running means exclusively over these early steps  enabling
significant savings in compute time without sacrificing effectiveness.

\emph{The paper is organized as follows.}
\S\ref{sec:formal_drift} and \S\ref{sec:weight_drift} formally and empirically
characterize weight drift. \S\ref{sec:sparsity_predictor} analyzes controllable
post-activation sparsity and its relationship to model accuracy.
\S\ref{sec:activation_functions} evaluates alternative activation functions
and their sparsity--accuracy tradeoffs, while \S\ref{sec:gpt_results} examines
pathological activation spikes in GPT-nano and the benefits of clipped squared activations.
\S\ref{sec:efficiency} discusses computational efficiency gains enabled by early
drift stabilization, and \S\ref{sec:related_work} covers related work.
Appendix~\ref{sec:appendix} covers proofs, implementation details, and extended experimental results.

\section{Formal Illustration of Negative Weight
Drift}
\label{sec:formal_drift}
Throughout this section we restrict the formal argument to ReLU and demonstrate results for other activation functions empirically~\footnote{
The formal extension of Theorems~\ref{th:pos_p_gradient} and \ref{th:pos_p_gradient_ce} to smooth activations requires a continuous analogue of the survival-conditioning argument and is beyond the scope of this paper. }.
Consider a multilayer perceptron with randomly initialized, zero-mean
weights, using ReLU activation without mean-centering normalization
layers. We demonstrate that at initialization and during the first training iterations under MSE or cross-entropy loss, the gradient of the
loss with respect to the pre-activations is positive in expectation.
Since gradient descent applies updates in the negative direction of the
gradient ($\mathbf{w} \leftarrow \mathbf{w} - \eta \nabla_{\mathbf{w}}$),
these consistently positive gradients drive downstream weights toward
negative values. This negative weight drift in turn shifts pre-activations
further below zero
reinforcing the effect in a self-amplifying cycle. As training progresses
and gradients diminish, the drift stabilizes.
Our formal analysis applies to the early phase of training, when the properties of the random zero-mean initialization still reasonably hold.
\paragraph{Properties of the Effective Weight Matrix.}
Consider a network with $L$ linear layers interleaved with ReLU activations. For a fixed input $\mathbf{x}$, the activation pattern of each ReLU is fixed, so each activation layer acts as a binary diagonal matrix $\mathbf{D}_l$, where $(\mathbf{D}_l)_{ii} = 1$ if the $i$-th neuron is active and $0$ otherwise. 
Pick any intermediate layer $l$ with pre-activation vector $\mathbf{p}^{(l)}$. All layers after $l$ form the composition:
\begin{equation}
    \mathbf{V}_{\mathrm{eff}}^{(l)}
    = \mathbf{W}_L\,\mathbf{D}_{L-1}\,\mathbf{W}_{L-1}\cdots\mathbf{D}_l\,\mathbf{W}_{l+1},
    \label{eq:v_eff}
\end{equation}
so that the network output can be written as
\begin{equation}
    f(\mathbf{x}) = \mathbf{V}_{\mathrm{eff}}^{(l)}\,\sigma\!\left(\mathbf{p}^{(l)}\right).
    \label{eq:folded}
\end{equation}

\begin{theorem}
\label{th:veff_properties}
Let $\mathbf{V}_{\mathrm{eff}}$ be as in~\eqref{eq:v_eff} with $\mathbf{W}_{L}, \dots, \mathbf{W}_{l+1}$ drawn from a zero-mean i.i.d.\ distribution, and $\sigma = \mathrm{ReLU}$. Denote the rows of $\mathbf{V}_{\mathrm{eff}}$ by $\mathbf{v}_1, \dots, \mathbf{v}_{d_p}$. Then:
\begin{equation}
        \mathbb{E}[\mathbf{v}_{i}] = \mathbf{0},  \forall\, i, \quad
        \mathbb{E}[\langle \mathbf{v}_i, \mathbf{v}_j\rangle] \geq 0, \forall\, i,\, j.
\end{equation}
\end{theorem}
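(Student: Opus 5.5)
We read $\mathbf{v}_i$ as the vector that multiplies the $i$-th coordinate $\sigma(p^{(l)}_i)$ in~\eqref{eq:folded}, i.e.\ $\mathbf{v}_i=\mathbf{V}_{\mathrm{eff}}\mathbf{e}_i$. The plan is to peel off one weight matrix at a time, starting from the output side. This is convenient because $\mathbf{W}_L$ is not followed by any mask. Write $\mathbf{v}_i=\mathbf{W}_L\mathbf{u}_i$ with
\[
\mathbf{u}_i=\mathbf{D}_{L-1}\mathbf{W}_{L-1}\cdots\mathbf{D}_l\mathbf{W}_{l+1}\mathbf{e}_i .
\]
For a fixed input, the masks $\mathbf{D}_{L-1},\dots,\mathbf{D}_l$ depend only on $\mathbf{x}$ and on $\mathbf{W}_{L-1},\dots,\mathbf{W}_{l+1}$ (and the earlier layers). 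Hence $\mathbf{u}_i$ is independent of $\mathbf{W}_L$.

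\textbf{First claim.} Condition on everything except $\mathbf{W}_L$. Then $\mathbb{E}[\mathbf{v}_i\mid \mathbf{u}_i]=\mathbb{E}[\mathbf{W}_L]\,\mathbf{u}_i=\mathbf{0}$, since the entries of $\mathbf{W}_L$ have mean zero. The tower property then gives $\mathbb{E}[\mathbf{v}_i]=\mathbf{0}$. This needs only zero mean and independence of the last layer, not symmetry of the distribution.

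\textbf{Second claim, last layer.} Use the same conditioning:
\[
\mathbb{E}[\langle\mathbf{v}_i,\mathbf{v}_j\rangle\mid\mathbf{u}]=\mathbf{u}_i^\top\mathbb{E}[\mathbf{W}_L^\top\mathbf{W}_L]\,\mathbf{u}_j .
\]
With i.i.d.\ zero-mean entries of variance $s^2$, $\mathbb{E}[\mathbf{W}_L^\top\mathbf{W}_L]=d_{\mathrm{out}}s^2\mathbf{I}$. Therefore $\mathbb{E}\langle\mathbf{v}_i,\mathbf{v}_j\rangle=d_{\mathrm{out}}s^2\,\mathbb{E}\langle\mathbf{u}_i,\mathbf{u}_j\rangle$. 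For $i=j$ this is a squared norm and is trivially $\ge 0$, so only $i\neq j$ requires work.

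\textbf{Induction.} Next I would treat $\mathbb{E}\langle\mathbf{u}_i,\mathbf{u}_j\rangle$ by induction on depth, using
\[
\langle\mathbf{u}_i,\mathbf{u}_j\rangle=\mathbf{u}_i'^\top\mathbf{W}_{L-1}^\top\mathbf{D}_{L-1}\mathbf{W}_{L-1}\mathbf{u}_j' .
\]
If $\mathbf{D}_{L-1}$ could be treated as independent of $\mathbf{W}_{L-1}$, as in the paper's framing where the activation pattern is fixed for a fixed input, then
\[
\mathbb{E}[\mathbf{W}_{L-1}^\top\mathbf{D}_{L-1}\mathbf{W}_{L-1}]=s^2\,\mathrm{tr}(\mathbf{D}_{L-1})\,\mathbf{I}.
\]
Iterating down to $\mathbf{W}_{l+1}$ gives
\[
\mathbb{E}\langle\mathbf{v}_i,\mathbf{v}_j\rangle=c\,\delta_{ij}, \qquad c=d_{\mathrm{out}}s^{2(L-l)}\,\mathbb{E}\textstyle\prod_m\mathrm{tr}(\mathbf{D}_m)\ge 0 .
\]
This would give both inequalities, with equality (orthogonality in expectation) off the diagonal.

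\textbf{Main obstacle.} The genuine difficulty is that $\mathbf{D}_{L-1}$ is \emph{not} independent of $\mathbf{W}_{L-1}$. The $k$-th mask entry is $\mathbb{1}[\langle\mathbf{w}_k,\mathbf{h}\rangle>0]$, where $\mathbf{h}$ is the previous hidden state. So the summand $\mathbb{1}[\langle\mathbf{w}_k,\mathbf{h}\rangle>0]\,\langle\mathbf{w}_k,\mathbf{u}_i'\rangle\langle\mathbf{w}_k,\mathbf{u}_j'\rangle$ need not have the product form above. To handle it, I would condition on all rows except $\mathbf{w}_k$ and, assuming in addition that the weight law is symmetric (e.g.\ Gaussian), pair $\mathbf{w}_k$ with $-\mathbf{w}_k$. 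Under this pairing the product $\langle\mathbf{w}_k,\mathbf{u}_i'\rangle\langle\mathbf{w}_k,\mathbf{u}_j'\rangle$ is invariant. The indicator becomes $\mathbb{1}[\cdot>0]+\mathbb{1}[\cdot<0]=1$ almost surely. Hence the conditional expectation equals $\tfrac12\mathbb{E}[\langle\mathbf{w}_k,\mathbf{u}_i'\rangle\langle\mathbf{w}_k,\mathbf{u}_j'\rangle]=\tfrac12 s^2\langle\mathbf{u}_i',\mathbf{u}_j'\rangle$.

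This step needs $\mathbf{u}_i'$ and $\mathbf{u}_j'$ to be unaffected by the flip. That holds because they are built from the earlier factors $\mathbf{D}_{L-2}\mathbf{W}_{L-2}\cdots\mathbf{W}_{l+1}\mathbf{e}_i$. However, the later masks do depend on $\mathbf{w}_k$, and I have not checked that this dependence leaves the pairing argument intact. If it works, the induction closes with the factor $\tfrac12 s^2 d_{L-1}$ per layer. If the symmetrization fails, I would fall back on stating the result under the fixed-pattern (independent-mask) convention of the preceding paragraph.
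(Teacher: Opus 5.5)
Your symmetrization argument is correct, and the step you left unchecked does go through. The worry about later masks disappears precisely because you integrate from the output side. Write $\mathbf{u}^{(m)}_i$ for $\mathbf{e}_i$ pushed through $\mathbf{W}_{l+1},\dots,\mathbf{W}_m$, each followed by its mask, so that $\mathbf{u}^{(l)}_i=\mathbf{e}_i$ and $\mathbf{u}^{(L-1)}_i=\mathbf{u}_i$. Write $\mathbf{h}^{(m)}$ for the hidden state entering $\mathbf{W}_{m+1}$. Your flip argument gives $\mathbb{E}[\langle\mathbf{u}^{(m)}_i,\mathbf{u}^{(m)}_j\rangle\mid\mathbf{W}_{l+1},\dots,\mathbf{W}_{m-1}]=\tfrac{d_m s^2}{2}\langle\mathbf{u}^{(m-1)}_i,\mathbf{u}^{(m-1)}_j\rangle$, with a \emph{deterministic} prefactor. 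You apply this by the tower property for $m=L-1,\dots,l+1$ in turn. At the step for $\mathbf{W}_m$, the integrand is therefore a constant times $\langle\mathbf{u}^{(m)}_i,\mathbf{u}^{(m)}_j\rangle$, which contains $\mathbf{W}_m$ only through its explicit factor and the mask right after it. The later masks do depend on the flipped row through $\mathbf{h}^{(m)}$, but they were already averaged out. They leave nothing behind because each earlier step returned the constant $d_{m'}s^2/2$, not something like a mask trace that would still depend on $\mathbf{h}^{(m)}$.

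Two small points remain. First, the symmetric law should be atomless, so that $\mathbb{I}[\cdot>0]+\mathbb{I}[\cdot<0]=1$ almost surely. Second, the event $\mathbf{h}^{(m-1)}=\mathbf{0}$ with $m-1>l$ is harmless: it forces the mask after $\mathbf{W}_{m-1}$, and hence $\mathbf{u}^{(m-1)}$, to vanish. You obtain $\mathbb{E}\langle\mathbf{v}_i,\mathbf{v}_j\rangle=d_{\mathrm{out}}s^2\prod_{m=l+1}^{L-1}\tfrac{d_m s^2}{2}\,\delta_{ij}$ when $\sigma(\mathbf{p}^{(l)})\neq\mathbf{0}$, and $0$ otherwise. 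The fixed-mask fallback is unnecessary.

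This is a genuinely different route from the paper's. On the first claim you match the paper by averaging over $\mathbf{W}_L$. The difference is that you justify it by the independence of $\mathbf{W}_L$ from the masks, whereas the paper simply declares the masks fixed.

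On the second claim the paper argues by survival conditioning. It asserts, without further justification, that it suffices to consider two vectors $\mathbf{a},\mathbf{b}$ conditioned on $\mathbf{a}^\top\mathbf{x}\ge 0$ and $\mathbf{b}^\top\mathbf{x}\ge 0$. It then rotates so that $\mathbf{x}=\mathbf{e}_1$, which tacitly uses rotational invariance of the weight law (e.g.\ Gaussian), not merely zero mean, and obtains $\mathbb{E}[|a_0||b_0|]>0$. That computation captures how survival tilts the conditional \emph{mean} of a row toward the input, which correlates two \emph{different} surviving rows. But once $\mathbf{W}_L$ is integrated out, the object that actually appears is $\mathbf{W}^\top\mathbf{D}\mathbf{W}$. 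This only sees the conditional \emph{second moment} of each single row, and for a sign-symmetric law half-space conditioning leaves that equal to $s^2\mathbf{I}$. That is exactly your sign-flip identity.

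Your route therefore buys a complete derivation under a weaker hypothesis (sign symmetry), and a sharper conclusion. The off-diagonal expectations vanish, so the strict positivity in the MSE theorem comes entirely from the $j=i$ term. The paper's route is shorter and conveys the survival intuition, but the quantity it controls is not the one in the statement.

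Your extra hypothesis cannot be dropped for the column reading. Take $\mathbf{V}_{\mathrm{eff}}=\mathbf{W}_L\mathbf{D}\mathbf{W}_{l+1}$, $\sigma(\mathbf{p}^{(l)})=\mathbf{e}_1+\mathbf{e}_2$, and entries equal to $2$ with probability $1/3$ and $-1$ with probability $2/3$. Then $\mathbb{E}[\mathbb{I}[w_1+w_2>0]\,w_1w_2]=-4/9$, hence $\mathbb{E}\langle\mathbf{v}_1,\mathbf{v}_2\rangle<0$. Under the literal ``rows'' wording of the statement, by contrast, your last-layer step alone gives $\mathbb{E}\langle\mathbf{v}_i,\mathbf{v}_j\rangle=\delta_{ij}s^2\,\mathbb{E}\|M\|_F^2$. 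Here $M$ is the product to the right of $\mathbf{W}_L$, and the step uses only zero mean and independence.
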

\begin{proof}[Proof of the first statement]
Each row of $\mathbf{V}_{\mathrm{eff}}$ can be written as
$\mathbf{v}_i = \left[\mathbf{D}_{L-1}\,\mathbf{W}_{L-1}\cdots\mathbf{D}_l\,\mathbf{W}_{l+1}\right]^\top [\mathbf{W}_{L}]_{i}$,
where $[\mathbf{W}_{L}]_{i}$ denotes the $i$-th row of $\mathbf{W}_L$. Since $\mathbb{E}[W_{ij}] = 0$ for all entries and the diagonal matrices $\mathbf{D}_k$ are fixed (determined by the input), the expectation factors through the outermost weight matrix, giving $\mathbb{E}[\mathbf{v}_{i}] = \mathbf{0}$.
\end{proof}

\begin{proof}[Proof of the second statement]
For ReLU, each $\mathbf{D}_l$ is a binary diagonal matrix that selects active neurons, so $\mathbf{V}_{\mathrm{eff}}$ is a product of random weight matrices with inactive rows zeroed out. At each ReLU gate $\mathbf{D}_k$, the row survives only if the corresponding pre-activation is non-negative, i.e., its inner product with the layer input is $\geq 0$. The property $\mathbb{E}[\langle \mathbf{v}_i, \mathbf{v}_j \rangle] \geq 0$
thus reduces to a property of random vectors conditioned on ReLU survival: it suffices to show that for any two random vectors $\mathbf{a}$ and $\mathbf{b}$ drawn from a zero-mean i.i.d.\ distribution, conditioned on $\mathbf{a}^\top \mathbf{x} \geq 0$ and $\mathbf{b}^\top \mathbf{x} \geq 0$ for a fixed input $\mathbf{x}$, we have $\mathbb{E}[\mathbf{a}^\top \mathbf{b}] \geq 0$. Intuitively, conditioning on survival forces both vectors to share a positive component along the input direction, inducing a positive correlation. By the symmetry of the distribution we may choose coordinates so that $\mathbf{x} = [1, 0, \dots, 0]$. The conditions $\mathbf{a}^\top \mathbf{x} \geq 0$ and $\mathbf{b}^\top \mathbf{x} \geq 0$ then reduce to $a_0 \geq 0$ and $b_0 \geq 0$, so we can write $\mathbf{a} = [|a_0|, a_1, \dots, a_n]$ and $\mathbf{b} = [|b_0|, b_1, \dots, b_n]$. Then:
\[
    \mathbb{E}[\mathbf{a}^\top \mathbf{b}]
    = \mathbb{E}[|a_0| \cdot |b_0|] + \mathbb{E}\!\left[\sum_{i \geq 1} a_i b_i\right].
\]
The first term is strictly positive since $|a_0|$ and $|b_0|$ are positive random variables. The second term equals zero by the zero-mean i.i.d.\ assumption on the entries. Thus $\mathbb{E}[\mathbf{a}^\top \mathbf{b}] \geq 0$.
\end{proof}

\paragraph{Positive Expected Gradient under MSE and Cross-Entropy.}
\label{subsec:pos_grad}
Here we show that, at initialization, the gradient of the loss with
respect to any positive pre-activation is non-negative in expectation,
both for MSE regression and for softmax cross-entropy classification.
We state the two results in parallel, their proofs are provided
Sections~\ref{sec:mse_proof} and~\ref{sec:ce_proof}, respectively.

\begin{theorem}[MSE loss]
\label{th:pos_p_gradient}
Let $f(\mathbf{x}) = \mathbf{V}_{\mathrm{eff}}^{(l)}\,\sigma(\mathbf{p}^{(l)})$
be as in~\eqref{eq:folded}, with $\sigma = \mathrm{ReLU}$ and
$\mathbf{V}_{\mathrm{eff}}^{(l)}$ satisfying
Theorem~\ref{th:veff_properties}. Assume the network is at initialization,
so that $\mathbf{V}_{\mathrm{eff}}^{(l)}$ is independent of
$\mathbf{p}^{(l)}$ and of the target $\mathbf{y}$. Consider the MSE loss
$\ell(f(\mathbf{x}), \mathbf{y}) = \tfrac{1}{2}\|f(\mathbf{x}) - \mathbf{y}\|_2^2$.
Then for any neuron $i$, $\mathbb{E}\!\left[\frac{\partial \ell}{\partial p_i^{(l)}}\right] \geq 0$, with strict inequality whenever $p_i^{(l)} > 0$, where the expectation
is taken over $\mathbf{V}_{\mathrm{eff}}^{(l)}$.
\end{theorem}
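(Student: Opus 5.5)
Compute the gradient by the chain rule and take the expectation over $\mathbf{V}_{\mathrm{eff}}^{(l)}$, holding $\mathbf{p}^{(l)}$ and $\mathbf{y}$ fixed. Write $\mathbf{h} = \sigma(\mathbf{p}^{(l)})$ and $\mathbf{V} = \mathbf{V}_{\mathrm{eff}}^{(l)}$, and let $\mathbf{c}_k$ denote the $k$-th column of $\mathbf{V}$. Then $f = \sum_k h_k \mathbf{c}_k$ and
\[
\frac{\partial \ell}{\partial p_i^{(l)}} = \mathbb{1}[p_i^{(l)}>0]\,\mathbf{c}_i^\top\bigl(f(\mathbf{x})-\mathbf{y}\bigr)
= \mathbb{1}[p_i^{(l)}>0]\Bigl(\sum_k h_k\,\mathbf{c}_i^\top \mathbf{c}_k - \mathbf{c}_i^\top \mathbf{y}\Bigr).
\]
If $p_i^{(l)}\le 0$, the gradient is identically zero, so the claim holds with equality. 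It therefore suffices to treat $p_i^{(l)}>0$.

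For $p_i^{(l)}>0$, take expectations term by term.

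\textbf{Target term.} By assumption $\mathbf{V}$ is independent of $\mathbf{y}$. Hence $\mathbb{E}[\mathbf{c}_i^\top\mathbf{y}] = \mathbb{E}[\mathbf{c}_i]^\top\mathbb{E}[\mathbf{y}]$. This vanishes because $\mathbb{E}[\mathbf{V}] = \mathbf{0}$, which is the first statement of Theorem~\ref{th:veff_properties}: all entries of $\mathbf{V}$ have zero mean.

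\textbf{Quadratic term.} Since $h_k\ge 0$ and $\mathbf{h}$ is independent of $\mathbf{V}$, we get
\[
\mathbb{E}\Bigl[\sum_k h_k \mathbf{c}_i^\top\mathbf{c}_k\Bigr] = \sum_k h_k\,\mathbb{E}[\mathbf{c}_i^\top\mathbf{c}_k].
\]
Each $\mathbb{E}[\mathbf{c}_i^\top\mathbf{c}_k]$ is nonnegative by the second statement of Theorem~\ref{th:veff_properties}. That statement is phrased for rows $\mathbf{v}_i$ indexed by the $d_p$ pre-activation units, so I read it as applying to the vectors attached to each hidden unit, which are exactly the $\mathbf{c}_k$. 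Consequently every summand is nonnegative and the expected gradient is $\ge 0$.

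\textbf{Strictness.} Isolate the diagonal summand $k=i$. It equals $h_i\,\mathbb{E}\|\mathbf{c}_i\|^2$ with $h_i = p_i^{(l)} > 0$. We need $\mathbb{E}\|\mathbf{c}_i\|^2>0$. This follows from $\mathbf{c}_i = \mathbf{W}_L\mathbf{D}_{L-1}\cdots\mathbf{D}_l[\mathbf{W}_{l+1}]_{:,i}$: under a non-degenerate weight distribution, every gated path survives with positive probability, so $\mathbf{c}_i$ is nonzero with positive probability. Note that the $\mathbf{D}_k$ depend on the weights themselves, so this really is a positive-probability statement and not a deterministic one.

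\textbf{Main obstacle.} The delicate step is the independence bookkeeping. The gates $\mathbf{D}_k$ are functions of the weights, so $\mathbf{V}$ is not linear in independent Gaussians. I would rely on the theorem's stated assumption that $\mathbf{V}$ is independent of $\mathbf{p}^{(l)}$ and $\mathbf{y}$ at initialization. I would also invoke Theorem~\ref{th:veff_properties} as given, rather than re-deriving the survival-conditioning argument, and only remark that the zero-mean property survives gating by the sign symmetry of the outermost matrix $\mathbf{W}_L$.
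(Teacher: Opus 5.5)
Your proposal is correct and follows essentially the same route as the paper's proof. You expand $\partial\ell/\partial p_i$ columnwise, remove the target term using $\mathbb{E}[\mathbf{v}_i]=\mathbf{0}$, bound the quadratic term with the nonnegative expected inner products from Theorem~\ref{th:veff_properties}, and get strictness from the diagonal summand $\sigma(p_i)\,\mathbb{E}\|\mathbf{v}_i\|^2$. Your reading of the vectors in Theorem~\ref{th:veff_properties} as the columns of $\mathbf{V}_{\mathrm{eff}}^{(l)}$ is what the paper's proof also does, and your note that $\mathbb{E}\|\mathbf{c}_i\|^2>0$ needs a non-degeneracy justification is slightly more careful than the paper, which simply asserts it.
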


\begin{theorem}[Cross-entropy loss]
\label{th:pos_p_gradient_ce}
Let $f(\mathbf{x}) = \mathbf{V}_{\mathrm{eff}}^{(l)}\,\sigma(\mathbf{p}^{(l)})$
be as in~\eqref{eq:folded}, with $\sigma = \mathrm{ReLU}$ and
$\mathbf{V}_{\mathrm{eff}}^{(l)} \in \mathbb{R}^{C \times d_p}$ satisfying
Theorem~\ref{th:veff_properties}, where $C$ denotes the number of output
classes. Assume the network is at initialization, so that
$\mathbf{V}_{\mathrm{eff}}^{(l)}$ is independent of $\mathbf{p}^{(l)}$ and
of the one-hot label $\mathbf{y}$. Consider the cross-entropy loss
$\ell(f(\mathbf{x}), \mathbf{y}) = -\sum_{c=1}^{C} y_c \log s_c$,
where $\mathbf{s} = \mathrm{softmax}(f(\mathbf{x}))$. Then for any
neuron $i$,
 $\mathbb{E}\!\left[\frac{\partial \ell}{\partial p_i^{(l)}}\right] \geq 0$, with strict inequality whenever $p_i^{(l)} > 0$, where the expectation
is taken over $\mathbf{V}_{\mathrm{eff}}^{(l)}$, up to corrections of
order $O(\|\mathbf{f}\|^2)$ from the softmax linearization around
$\mathbf{f} = \mathbf{0}$.
\end{theorem}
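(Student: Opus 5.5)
The plan is to compute the gradient by the chain rule, linearize the softmax around $\mathbf{f}=\mathbf{0}$, and take expectations using Theorem~\ref{th:veff_properties} and the independence of $\mathbf{V}_{\mathrm{eff}}^{(l)}$ from $\mathbf{p}^{(l)}$ and $\mathbf{y}$. Write $\mathbf{h}=\sigma(\mathbf{p}^{(l)})$, so $\mathbf{f}=\mathbf{V}_{\mathrm{eff}}^{(l)}\mathbf{h}=\sum_j h_j \mathbf{u}_j$, where $\mathbf{u}_j$ denotes the $j$-th column of $\mathbf{V}_{\mathrm{eff}}^{(l)}$. I read the vectors $\mathbf{v}_i$ of Theorem~\ref{th:veff_properties} as the weight vectors attached to neuron $i$, so that the relevant properties become $\mathbb{E}[\mathbf{u}_j]=\mathbf{0}$ and $\mathbb{E}[\langle\mathbf{u}_i,\mathbf{u}_j\rangle]\ge 0$. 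The standard softmax cross-entropy gradient is $\partial\ell/\partial\mathbf{f}=\mathbf{s}-\mathbf{y}$. Therefore
\[
\frac{\partial \ell}{\partial p_i^{(l)}} = \mathbb{1}[p_i^{(l)}>0]\,\langle \mathbf{u}_i, \mathbf{s}-\mathbf{y}\rangle .
\]
If $p_i^{(l)}\le 0$ the gradient vanishes and the claim $\mathbb{E}\ge 0$ is trivial. So the work is in the case $p_i^{(l)}>0$.

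Next I linearize the softmax at $\mathbf{f}=\mathbf{0}$:
\[
\mathbf{s}=\tfrac{1}{C}\mathbf{1}+\tfrac{1}{C}\bigl(\mathbf{I}-\tfrac{1}{C}\mathbf{1}\mathbf{1}^\top\bigr)\mathbf{f}+O(\|\mathbf{f}\|^2).
\]
Substituting gives three contributions. The first is the constant term $\langle\mathbf{u}_i,\tfrac1C\mathbf{1}-\mathbf{y}\rangle$. Since $\mathbf{y}$ is independent of $\mathbf{V}_{\mathrm{eff}}^{(l)}$, its expectation factors as $\langle\mathbb{E}[\mathbf{u}_i],\mathbb{E}[\tfrac1C\mathbf{1}-\mathbf{y}]\rangle=0$ by the first statement of Theorem~\ref{th:veff_properties}. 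The second is the linear term
\[
\tfrac{1}{C}\sum_j h_j\Bigl(\langle\mathbf{u}_i,\mathbf{u}_j\rangle-\tfrac{1}{C}\langle\mathbf{u}_i,\mathbf{1}\rangle\langle\mathbf{1},\mathbf{u}_j\rangle\Bigr).
\]
Because $\mathbf{h}$ is independent of $\mathbf{V}_{\mathrm{eff}}^{(l)}$, the $h_j\ge 0$ can be pulled outside the expectation. The third is the remainder, which is $O(\|\mathbf{f}\|^2)$ and is discarded by the statement.

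The main obstacle is the centering correction $-\tfrac1C\langle\mathbf{u}_i,\mathbf{1}\rangle\langle\mathbf{1},\mathbf{u}_j\rangle$. The bound $\mathbb{E}[\langle\mathbf{u}_i,\mathbf{u}_j\rangle]\ge0$ alone does not control it. I would handle it by entrywise expansion: its expectation equals $\tfrac1C\sum_{c,c'}\mathbb{E}[U_{ci}U_{c'j}]$. I would then argue, by the same i.i.d. zero-mean structure of the outermost layer $\mathbf{W}_L$ used in the proof of the first statement, that cross-class terms vanish: $\mathbb{E}[U_{ci}U_{c'j}]=0$ for $c\ne c'$, since rows $c$ and $c'$ of $\mathbf{W}_L$ are independent and zero-mean. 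What remains is $\tfrac1C\sum_c\mathbb{E}[U_{ci}U_{cj}]=\tfrac1C\mathbb{E}[\langle\mathbf{u}_i,\mathbf{u}_j\rangle]$. The bracket thus has expectation $(1-\tfrac1C)\,\mathbb{E}[\langle\mathbf{u}_i,\mathbf{u}_j\rangle]\ge 0$ for every $j$.

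Summing, the expected gradient is $\tfrac{1}{C}\bigl(1-\tfrac1C\bigr)\sum_j h_j\,\mathbb{E}[\langle\mathbf{u}_i,\mathbf{u}_j\rangle]$ up to $O(\|\mathbf{f}\|^2)$, which is non-negative. For strict positivity when $p_i^{(l)}>0$, I isolate the diagonal term $j=i$. There $h_i=p_i^{(l)}>0$ and $\mathbb{E}\|\mathbf{u}_i\|^2>0$ because the weights are non-degenerate and the final layer has no gate. Combined with $C\ge2$ and the non-negativity of all other terms, this gives a strictly positive value.
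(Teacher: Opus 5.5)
Your proposal is correct and follows the paper's proof essentially step for step. You use the chain rule to get $\langle \mathbf{u}_i, \mathbf{s}-\mathbf{y}\rangle$, linearize the softmax with Jacobian $\tfrac{1}{C}\mathbf{P}$, remove the constant and label terms via $\mathbb{E}[\mathbf{u}_i]=\mathbf{0}$, and handle the centering correction by the same cross-class decorrelation through the independent zero-mean rows of $\mathbf{W}_L$, reaching the same coefficient $\tfrac{C-1}{C^2}$. Two minor differences: you sum the same-class terms directly, so you do not need the paper's extra remark that $\mathbb{E}[(\mathbf{v}_i)_c(\mathbf{v}_j)_c]$ is constant in $c$; and you spell out the strict-positivity step via the $j=i$ term, which the paper's proof only states as $\geq 0$.
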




\paragraph{Extension to Arbitrary Depth and Locality.}
Theorems~\ref{th:veff_properties} and~\ref{th:pos_p_gradient} hold for any 
intermediate layer $l$, so the positive-gradient property and the 
resulting negative weight drift propagates through the entire network. 
For each $l \in \{1, \dots, L-1\}$, we fold all subsequent layers into 
$\mathbf{V}_{\mathrm{eff}}^{(l)}$ as in~\eqref{eq:v_eff}. By 
Theorem~\ref{th:veff_properties}, $\mathbf{V}_{\mathrm{eff}}^{(l)}$ 
satisfies the zero-mean and non-negative cross-correlation conditions at 
initialization. By Theorem~\ref{th:pos_p_gradient}, the gradient with 
respect to any positive pre-activation $p_i^{(l)}$ at layer $l$ is 
non-negative in expectation over the downstream weights, with strict 
positivity whenever $p_i^{(l)} > 0$. Consequently, weights at every layer 
experience a non-positive expected update, i.e.\ negative drift. This consequently shifts pre-activations at layer $l$ downward, increasing the fraction of neurons falling below zero reinforcing the cycle. 

Although we present the analysis for a ReLU MLP without normalization
or skip connections, the argument is local: the structural property
established in Theorem~\ref{th:veff_properties} depends only on
the composition of subsequent linear layers and ReLU gates, and applies
to any contiguous stack of such layers within a larger architecture.

\begin{figure}[ht]
    \centering
    \includegraphics[width=\linewidth]{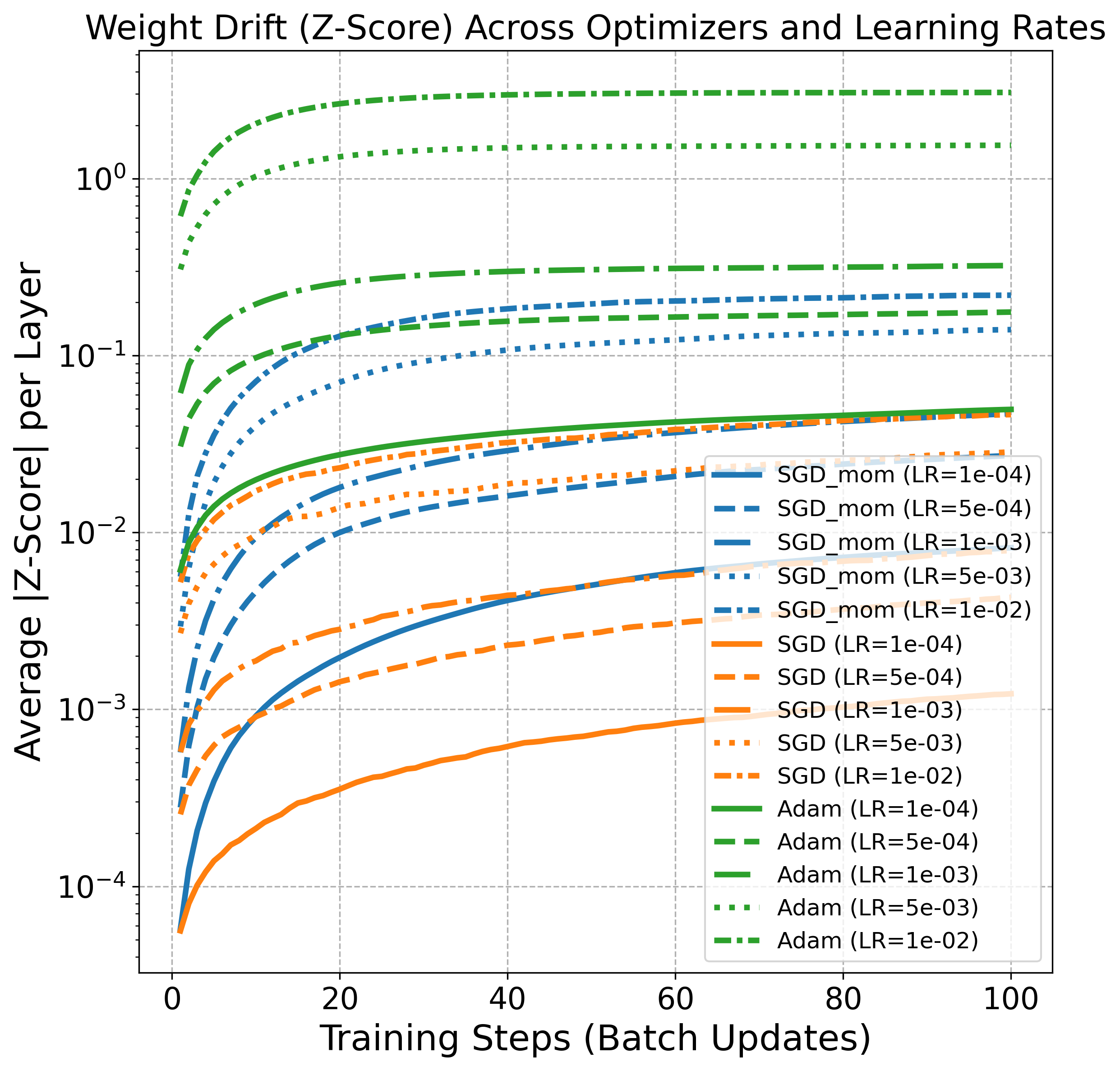}
    \caption{Weight drift measured as average absolute Z-score per layer over the
    first 100 training steps for an MLP trained on CIFAR-10. Momentum accelerates
    initial weight changes and leads to rapid convergence toward asymptotic drift levels,
    while plain SGD exhibits slower progression. Results are in log scale.}
    \label{fig:z_score_optimizers}
\end{figure}

\begin{figure*}[!ht]
        \centering
        \includegraphics[width=0.7\linewidth]{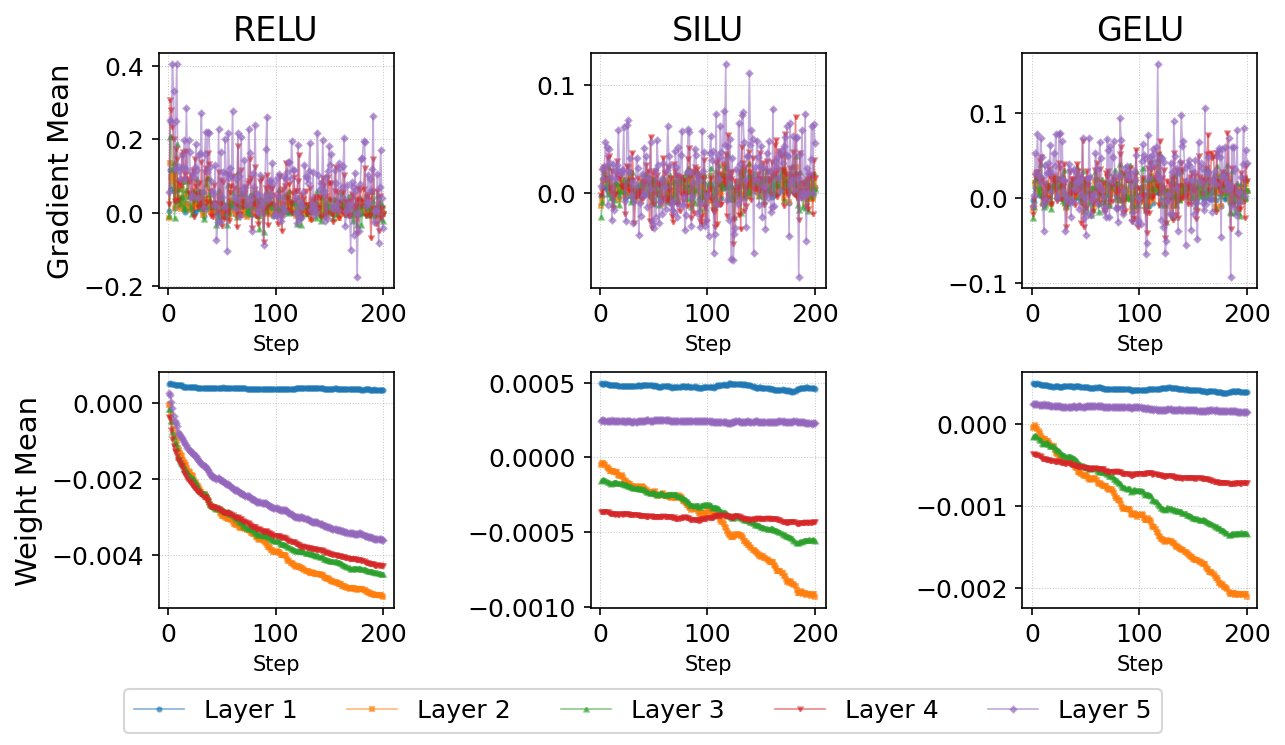}
        \caption{Random inputs with MSE loss.}
        \label{fig:mlp_weight_drift}
        \includegraphics[width=0.7\linewidth]{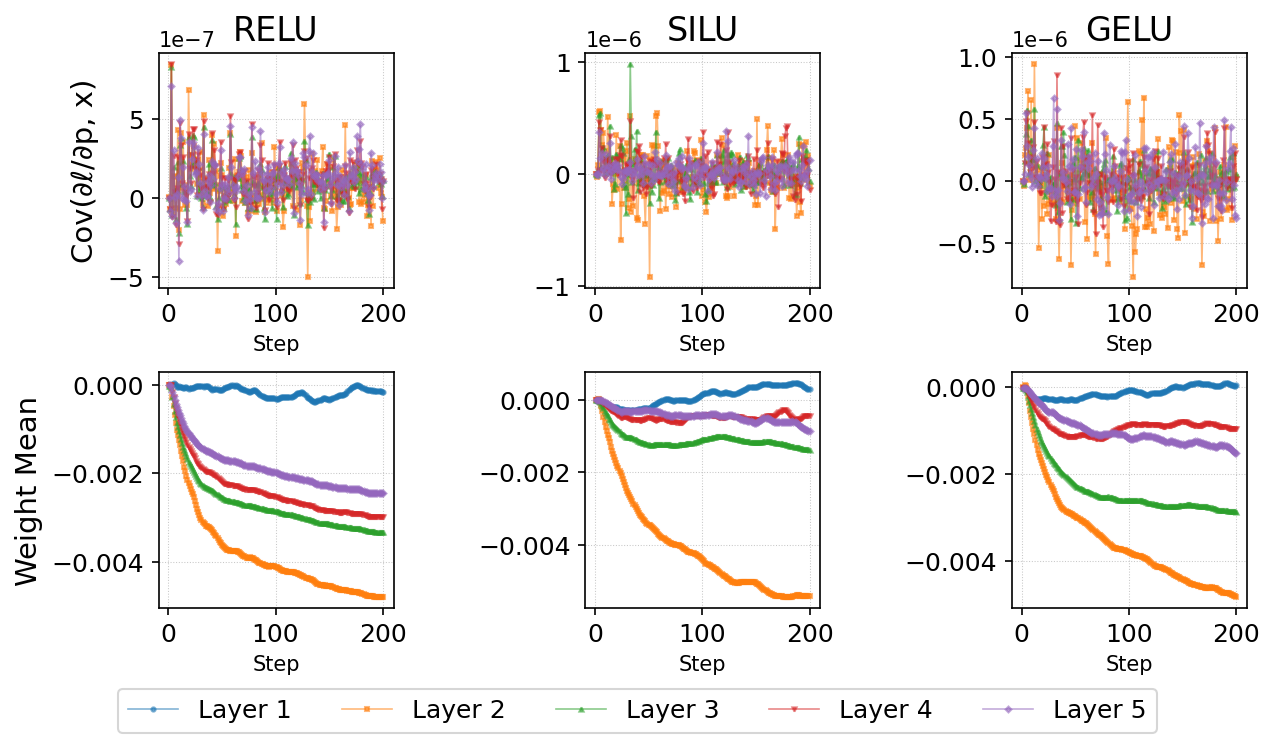}
        \caption{CIFAR-10 with cross-entropy loss.}
        \label{fig:mlp_ce_cov}
    \caption{Training a five-layer MLP with different activation functions on random data
    (a) and CIFAR-10 (b). For (a), random $\{X,Y\}$ pairs are sampled each time from
    $\mathcal{N}(0,1)$, trajectories are averaged across 10 runs. For (b), the same MLP is trained on CIFAR-10 with trajectories averaged
    across 10 runs. In both cases we observe negative weight drift across all activation
    functions, with the most pronounced effect for ReLU. For (b) we also report
    $\mathrm{Cov}_{\mathbf{x}}(\partial\ell/\partial p_i, x)$, which is orders of magnitude
    smaller than the weight mean. Other technical details are described in
    \S\ref{sec:mlp_details}.}
    \label{fig:mlp_combined}
\end{figure*}

\section{Empirical Results for Negative Weight Drift}
\label{sec:weight_drift}

In previous section we established that gradient descent drives weights negative in
expectation during early training. We now verify this empirically across
optimizers, learning rates, architectures, and activation functions.

\paragraph{Drift depends on optimizer and learning rate.}
First, we  analyze how quickly model weights evolve during the first
batch updates and what weight drift depends on.
Let $w_0 \in \mathbb{R}$ denote a scalar weight at initialization and $w_t$ its value
after $t$ gradient steps. We measure relative drift per layer using the \emph{Z-score}
$\mathbb{E}\!\left[|w_t - w_0| / \mathrm{std}(w_0)\right]$, where the expectation is taken
over all weights in a layer and the absolute value captures drift in both directions
symmetrically. Training an MLP on CIFAR-10 with SGD, SGD with momentum, and Adam
across a range of learning rates (Figure~\ref{fig:z_score_optimizers}) reveals three
patterns: \textbf{(1)}~momentum substantially accelerates drift; \textbf{(2)}~within each
optimizer, higher learning rates produce faster and larger drift; \textbf{(3)}~momentum-based
optimizers exhibit a rapid initial surge that then plateaus, while plain SGD progresses more
slowly and near-linearly. 

We hypothesize that momentum amplifies positive gradient
bias at early accumulation steps.  Additional results on training dynamics are presented 
in  \S\ref{app:additional_drift}.

\paragraph{Drift is intrinsic to optimization, not data.}
To demonstrate that negative weight drift is an intrinsic property 
of the optimization process we train the same MLP on random $\{X, Y\}$ pairs sampled from
$\mathcal{N}(0, 1)$. As shown in Figure~\ref{fig:mlp_weight_drift}, negative weight drift
arises across all activation functions even on entirely random data. 
For ReLU, the drift exhibits a clear depth ordering: deeper layers accumulate more negative
weight means, since the positive activation bias is strictly enforced at every layer and
compounds with depth. For SiLU and GELU, whose outputs are only positively \emph{biased}
rather than strictly non-negative, the depth ordering is less pronounced, though the overall
drift remains.

\begin{figure*}[ht]
    \centering
    \includegraphics[width=0.9\linewidth]{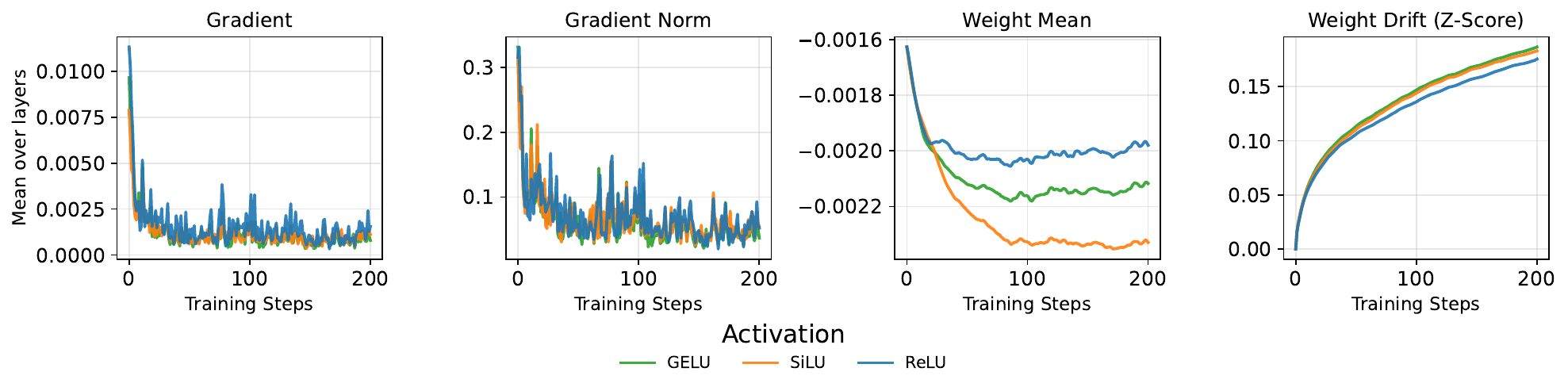}
    \caption{
\textbf{Weight drift in MP-SENet.}
Training dynamics under GELU, ReLU, and SiLU with the AdamW optimizer
($\text{lr} = 5 \times 10^{-4}$), averaged across all model layers. Drift patterns are
qualitatively consistent with those observed in the MLP and ResNet settings, with
a sharp weight change again emerging at very early iterations. The covariance term is
an order of magnitude smaller than the gradient value.
    }
    \label{fig:senet}
\end{figure*}


\vspace{-1em}
\paragraph{Negative weight drift across architectures and activation functions.}
Our formal analysis (Theorems~\ref{th:pos_p_gradient} and~\ref{th:pos_p_gradient_ce}) predicts
a positive expected gradient at initialization, and we find that this prediction holds broadly
in practice. Across four architectures MLP, MaxViT-Tiny,
MP-SENet (Figure~\ref{fig:senet}), and ResNet-18 (Appendix Figure~\ref{fig:resnet_drift}) the
positive-gradient property is consistently observed, and the covariance term
$\mathrm{Cov}_{\mathbf{x}}(\partial\ell/\partial p_i, x)$ remains orders of magnitude smaller
than the weight mean, validating the assumptions made in Appendix~\ref{sec:mse_proof}.
The same picture emerges across activation functions: for GELU, ReLU, and SiLU,
positively shifted gradients monotonically drive weight drift while the covariance contribution
stays negligible. In most cases the drift trajectory follows a characteristic shape, a sharp
``knee'' once gradient magnitudes diminish, after which drift stabilizes.


Table~\ref{tab:activations} reports accuracy and the fraction of negative pre-activation values for MLP, ResNet, ViT, and GPT across six activation functions. As a direct consequence of negative weights, the fraction of negative pre-activations is substantial in nearly every configuration typically between 60\% and 80\% confirming that weight drift is present throughout. The one exception is ResNet with batch normalization, where mean-centering directly disrupts the drift.

\section{Post-activation Sparsity and Performance}
\label{sec:sparsity_predictor}

\paragraph{Controllable Sparsity.} Weight drift naturally induces hard activation sparsity in ReLU-based models, while for smooth activations such as GELU and SiLU, it pushes pre-activations into near-zero regions, yielding predominantly low-magnitude outputs. Since this behavior emerges directly from the optimization dynamics we want to evaluate if the resulting sparsity impair model performance, or could resulting sparsity instead be beneficial? To answer this, we control post-activation sparsity levels across architectures and investigate whether explicitly enforcing higher or lower sparsity improves downstream performance. We further examine whether an optimal sparsity regime exists that outperforms the baseline naturally induced by weight drift.

\paragraph{Sparsification mechanisms.} We consider two  
methods to control post-activation sparsity: one commonly used in 
the literature, and one we propose specifically to evaluate 
ReLU-induced sparsity at controlled levels. \emph{Top-K} activation sparsity~\citep{top_k_sae} retains the  $k\%$ largest activations and hard-zeros the rest. We pair Top-K 
with GELU rather than ReLU, since ReLU already induces sparsity 
via weight drift, making it impossible to reliably control the 
sparsity lower bound. To evaluate controlled sparsity in ReLU-based models directly, 
we propose \textbf{Percentile Centering} (PC), which integrates 
into existing normalization layers with minimal architectural 
changes. Rather than shifting activations by the mean as in 
standard BatchNorm (BN) or LayerNorm, PC shifts by a target percentile 
$q$, $\hat{x} = \frac{x - Q_x(q)}{\sqrt{\sigma_x^2 + \epsilon}}$,
where $Q_x(q)$ denotes the $q$-th percentile of the pre-activation 
distribution and $\sigma_x^2$ its variance. When followed by ReLU, 
this causes a $q\%$ fraction of activations to fall below zero, 
directly controlling post-activation sparsity. This is particularly 
convenient for architectures like ResNet, where BN is 
placed immediately before ReLU. In our implementation, PC maintains a running 
mean of the percentile estimate analogous to the running statistics 
in BN.

\paragraph{Experimental setup.} We evaluate both mechanisms across 
four architectures: MLP, ResNet-18, MaxViT-Tiny, and GPT-nano. For 
ResNet-18, we additionally distinguish between \textit{per-activation} 
sparsity, which zeros individual activation values, and 
\textit{per-channel} (structured) sparsity, which zeros entire 
feature map channels with the latter being more relevant for 
practical applications. In total, we obtain 
$N = 79$ (model, sparsity) pairs across architectures, activation 
functions, and sparsification mechanisms. \emph{Sparsity is measured exclusively at the output of activation functions. Consequently, we do not account for intermediate model components that lack activation functions, for example, attention layers in transformers.}
Technical details are outlined in \S\ref{app:technical_details}.

\subsection{Results for Controlled Post-activation Sparsity Experiments}
While row numerical results are 
presented in Tables~\ref{tab:pruning_results_resized} 
and~\ref{tab:pruning_results_extended} to enable comparison across architectures, we normalized performance metrics by the maximum value observed for each architecture--modification pair. Since we report accuracy for all models except GPT-nano (where we use loss), we inverted the loss values for GPT-nano so that higher values consistently indicate better performance.  Visual inspection of the scaled metrics in Figure~\ref{fig:sparsity_predictor} reveals that performance remains largely stable across moderate sparsity levels and degrades sharply only beyond a high threshold. We further fit a three-parameter power law via nonlinear least squares: $\hat{a}(s) = A - B \cdot s^{N}$,
   where $s \in [0,1]$ denotes sparsity, and $A$, $B$, $N$ are free parameters. The fitted coefficients admit a direct interpretation: $A = 0.978$ corresponds to the predicted performance at zero sparsity, confirming near-complete retention of accuracy when no activations are zeroed. $B = 0.635$ represents the maximum potential drop, implying an asymptotic floor of $A - B \approx 0.34$ at full sparsity. The exponent $N = 16.72$ governs the sharpness of the transition, since $s^N$ remains negligible for $s \lesssim 0.7$ and grows rapidly thereafter, the curve is essentially flat across moderate sparsity levels and collapses only beyond $s \approx 0.8$. Together, these values quantify the qualitative ``cliff'' visible in Figure~\ref{fig:sparsity_predictor}, with performance preserved across a wide plateau before abruptly degrading at high sparsity.

The position of this cliff is primarily determined by model architecture, with skip connections improving robustness. For example, a plain MLP suffers a catastrophic collapse at $85\%$ sparsity, dropping to near-random performance ($\approx 10\%$ accuracy). In contrast, adding skip connections allows the model to maintain $74.0\%$ of its peak accuracy at that same level. Transformers, such as MaxViT-Tiny and GPT-nano, exhibit extreme resilience, with validation loss remaining nearly flat up to $s \approx 0.91$ for GPT. This robustness is partially attributed to skip connections, however, further investigation is required. In ResNet-18, structured (channel-wise) sparsity incurs a significantly heavier penalty than unstructured Top-K sparsity where degradation becomes more monotonic with sparsity. 
\begin{figure}[!b]
    \centering
    \includegraphics[width=\linewidth]{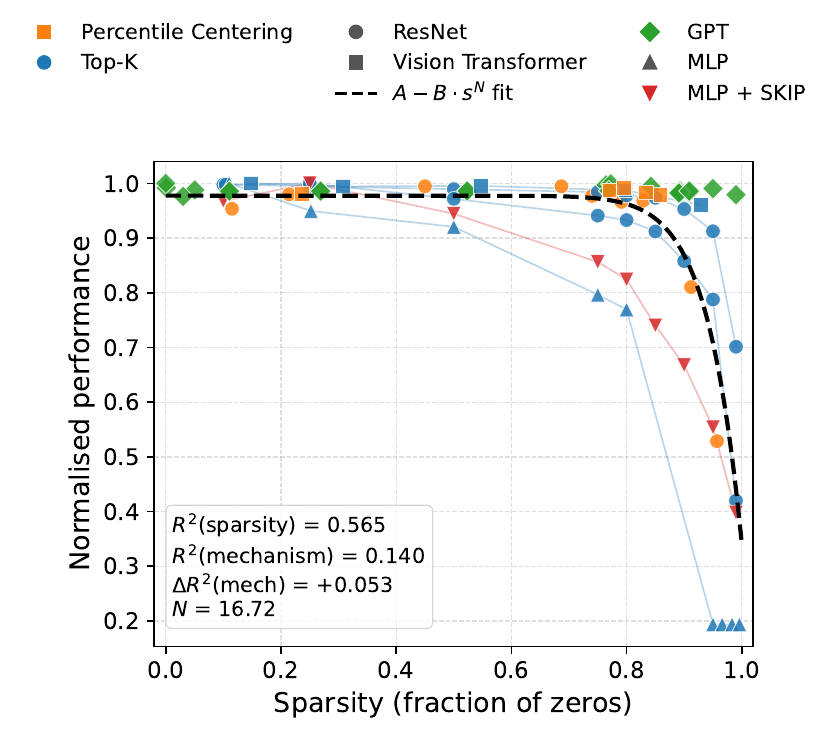}
    \caption{Scaled performance versus post-activation sparsity. The dashed curve denotes the fitted power-law decay model, with statistics confirming that sparsity level is the dominant predictor of accuracy ($R^2 = 0.565$), while the choice of mechanism (TOP-K vs.\ PC) contributes only a marginal incremental gain ($\Delta R^2 = +0.053$).}
    \label{fig:sparsity_predictor}
\end{figure}

\begin{tcolorbox}[
    colback=white,
    colframe=black,
    boxrule=0.5pt,
    before skip=6pt,
    after skip=6pt
]
Empirical results reveal an \textbf{accuracy cliff} with a small preceding monotonic decline. Performance remains nearly flat for moderate sparsity ($s \lesssim 0.7$) before collapsing abruptly once a critical threshold is reached. MLP and ResNet-18 with structured sparsity exhibit more monotonic preceding degradation.
\end{tcolorbox}

Finally, we observe that the specific \emph{mechanism} of sparsification  Top-K vs. Percentile Centering is of secondary importance. Although adding a mechanism indicator to the power law model marginally improves $R^2$ from $0.565$ to $0.618$, the \emph{amount} of sparsity remains the dominant determinant of predictive accuracy.

\begin{table*}
\centering
\caption{Accuracy (or validation loss for GPT) and fraction of
negative pre-activations across architectures and activation
functions. For MaxViT-Tiny  we replace all default GELU activations with
the specified function. Best per-row sparsity and performance are
in \textbf{bold}. Details on statistics 
measurement are provided in 
\S~\ref{app:statistics}. \textbf{Format: Accuracy / Negative values.}} 
\resizebox{1\textwidth}{!}{%
    \begin{tabular}{lcccccc}
    \toprule
    \textbf{Model}
        & \textbf{ReLU}
        & \textbf{GELU}
        & \textbf{SiLU}
        & \textbf{NoisyReLU}
        & \textbf{SUGARBSiLU}
        & $\textbf{ReLU}^2$ \\
    \midrule
    MLP
        & 46.6\% / 70.8\%
        & \textbf{51.1}\% / 63.6\%
        & 49.8\% / 61.0\%
        & 49.1\% / 70.4\%
        & 40.8\% / \textbf{72.5}\%
        & 10.0\% / \phantom{0}8.0\% \\
    MLP + RMSNorm
        & 44.7\% / 68.4\%
        & 48.4\% / 64.5\%
        & \textbf{49.4}\% / 62.3\%
        & 48.4\% / 66.8\%
        & 48.6\% / \textbf{70.8}\%
        & 48.0\% / 65.3\% \\
    MLP + LayerNorm
        & 40.9\% / \textbf{68.6}\%
        & 42.3\% / 67.2\%
        & 43.9\% / 65.4\%
        & 41.8\% / 66.4\%
        & 41.0\% / 58.5\%
        & \textbf{45.3}\% / 62.5\% \\
    ResNet (BN)
        & 93.8\% / 43.6\%
        & \textbf{94.1}\% / 43.0\%
        & 93.3\% / 43.6\%
        & 93.5\% / 43.1\%
        & 92.1\% / \textbf{55.5}\%
        & 10.0\% / \phantom{0}0.0\% \\
    MaxViT-Tiny 
        & 69.6\% / 79.8\%
        & \textbf{70.3}\% / 70.5\%
        & 69.4\% / 70.2\%
        & 68.5\% / 79.7\%
        & 51.5\% / \textbf{82.9}\%
        & 62.4\% / 77.1\% \\
    GPT (loss$\downarrow$)
        & 3.288 / 89.0\%
        & 3.260 / \textbf{89.3}\%
        & \multicolumn{1}{c}{--}
        & 3.287 / 89.2\%
        & 3.291 / 89.1\%
        & \textbf{3.250} / 84.2\% \\
    \midrule
    \textit{Average:} 
        & 59.1\% / 66.2\%
        & \textbf{61.2}\% / 61.8\%
        & \textbf{61.2}\% / 60.5\%
        & 60.3\% / 65.2\%
        & 54.8\% / \textbf{68.0}\%
        & 51.9\% / 68.3\% \\
    \bottomrule
    \end{tabular}%
}
\label{tab:activations}

\label{sec:activation_functions}
\end{table*}

\section{Activation Functions and the Sparsity–Accuracy Tradeoff}
\S\ref{sec:weight_drift} established that weight drift
naturally induces sparsity in ReLU-based models. \emph{We now ask whether
alternative activation functions can achieve a more favorable
sparsity--accuracy tradeoff, either by inducing sparsity through
different mechanisms or by recovering ReLU-style sparsity
post-training.} We evaluate five activation functions including GELU and ReLU
baselines across four architectures (MLP, ResNet-18, ViT,
GPT-nano).



\paragraph{Candidate activation functions.}
\textit{(1) NoisyReLU}~\citep{gulcehre2016noisy} injects input-dependent
noise into negative pre-activations during training, maintaining
gradient flow through otherwise dead
neurons~\citep{died_relu_problem} while reverting to standard ReLU
at inference. \textit{(2) SUGARBSiLU}~\citep{horuz2025resurrection}
applies ReLU in the forward pass but substitutes a smooth surrogate
gradient (B-SiLU) in the backward pass, ensuring nonzero gradient
signal for negative pre-activations. \textit{(3) ReLU}$^2$, originally
discovered through neural architecture search~\citep{so2022primer},
has since been shown to achieve a strong sparsity--accuracy
tradeoff in LLMs~\citep{zhang2024relu2}. Implementation details for
all three are provided in \S\ref{app:activation_implementations}. 

\textbf{ReLUfication}~\citep{mirzadeh2023relu, author2024calibration}
while not an activation function it allows one to reach the same goal, therefor we also consider this procedure.  Rather than training with a
sparsity-inducing activation function from scratch, models trained with a
smooth activation (e.g., GELU) are converted to ReLU variants via
brief fine-tuning. The motivation is that smooth activations avoid
the dying-neuron problem during training, while post-hoc conversion
recovers inference-time sparsity.

\subsection{Results For Activation Functions and the Sparsity–Accuracy Tradeoff}
Results are presented in Table~\ref{tab:activations}. \textbf{GELU is the strongest general-purpose baseline.}
It achieves the highest accuracy on four of five
classification settings and the second-best loss on GPT-nano,
despite producing essentially no natural sparsity. \textbf{ReLU$^2$ is normalization-sensitive but excels on GPT.} It collapses entirely (10\% accuracy) on MLP without
normalization and ResNet with BN, but MLP recovers when used with
RMSNorm (48.0\%) or LayerNorm (45.3\%). ReLU$^2$ achieves the best
GPT-nano validation loss overall (3.250 vs.\ 3.260 for GELU). We return to this
observation in \S~\ref{sec:gpt_results}~\ref{sec:gpt_results}, where we perform more detailed evaluation of ReLU$^2$ with GPT-nano. \textbf{SUGARBSiLU produces the highest sparsity at the cost of stability.}
SUGARBSiLU consistently yields the largest fraction of negative
pre-activations (averaging 68.0\%), but underperforms every other
activation on accuracy. We attribute this to persistently noisy
gradient trajectories throughout training (Figure~\ref{fig:mlp_training_dynamics} in 
Appendix), suggesting the surrogate
gradient introduces optimization instability. \textbf{NoisyReLU matches ReLU's sparsity with comparable accuracy.}
NoisyReLU achieves average sparsity (65.2\%) close to ReLU's
(66.2\%) and competitive accuracy (60.3\% vs.\ 59.1\%). The
margins are small but consistent, suggesting NoisyReLU is a
viable drop-in replacement when gradient flow through dead
neurons is desired.  
\begin{table} [!b]
\centering
\caption{ReLUfication results. Models trained with GELU are
converted to ReLU variants via one epoch of fine-tuning. Format:
\textbf{Accuracy} / \textbf{Negative pre-activations} /
\textbf{Post-activation sparsity}.}
\resizebox{0.43\textwidth}{!}{%
    \begin{tabular}{lll}
    \toprule
    \textbf{Model} & \textbf{Activation} & \textbf{Acc / Neg.\ Val / Spar.} \\
    \midrule
    \multirow{2}{*}{MLP}
        & GELU (baseline)   & 51.49\% / 0.632 / 0.026 \\
        & ReLU (1 epoch FT) & 51.55\% / 0.704 / 0.704 \\
    \midrule
    \multirow{3}{*}{ResNet}
        & GELU (baseline)        & 94.03\% / 0.583 / 0.080 \\
        & ReLU (1 epoch FT)      & 93.05\% / 0.577 / 0.578 \\
        & NoisyReLU (1 epoch FT) & 92.31\% / 0.553 / 0.555 \\
    \midrule
    \multirow{2}{*}{MaxViT-Tiny}
        & GELU (baseline)   & 70.30\% / 0.705 / 0.015 \\
        & ReLU (1 epoch FT) & 70.15\% / 0.740 / 0.740 \\
    \bottomrule
    \end{tabular}%
}
\label{tab:relufication}
\end{table}

\paragraph{ReLUfication recovers sparsity with negligible accuracy cost.}
Table~\ref{tab:relufication} reports ReLUfication results: GELU-trained
models are converted to ReLU variants via one epoch of fine-tuning. ReLUfication introduces 55--74\%
post-activation sparsity while accuracy degrades by less than 1
percentage point. Notably, the fraction of negative pre-activations
barely changes after conversion, indicating that the sparsity arises from the interaction
between the pre-existing GELU-trained weights and the new
ReLU thresholding rather than from substantial weight reorganization.
\begin{tcolorbox}[
    colback=white,
    colframe=black,
    boxrule=0.5pt,
    before skip=3pt,
    after skip=3pt
]
\textbf{Two conclusions follow:} \emph{ReLUfication is the practical recommendation for accuracy sparsity trade-off.} Light
fine-tuning from a GELU baseline delivers 55--74\% post-activation
sparsity with $\leq$1\% accuracy loss or none at all across MLP, ResNet, and ViT,
sidestepping the optimization difficulties of training with ReLU
from scratch. \emph{ReLU$^2$}, although sensitive to
normalization, outperforms every other activation we tested on
GPT-nano, motivating the further analysis in \S\ref{sec:gpt_results}.
\end{tcolorbox}

\section{Pathological Spikes Amplification with Squared Activation Functions}
\label{sec:gpt_results}

While prior work has shown that training GPT-like models with ReLU$^2$ can improve
performance~\cite{so2022primer}, we investigate whether amplified activations introduce any negative
effects during training. Our layer-wise analysis reveals a consistent spike in
maximum activation values between layers 2--4, regardless of the choice of activation
function. We visualize these spikes in Figure~\ref{fig:spikes_and_activation} (right) and quantify them
in Table~\ref{tab:activations_range}, measuring spike magnitude as the input range
(maximum minus minimum).

\begin{figure*}[ht!]
    \centering
    \includegraphics[width=0.9\linewidth]{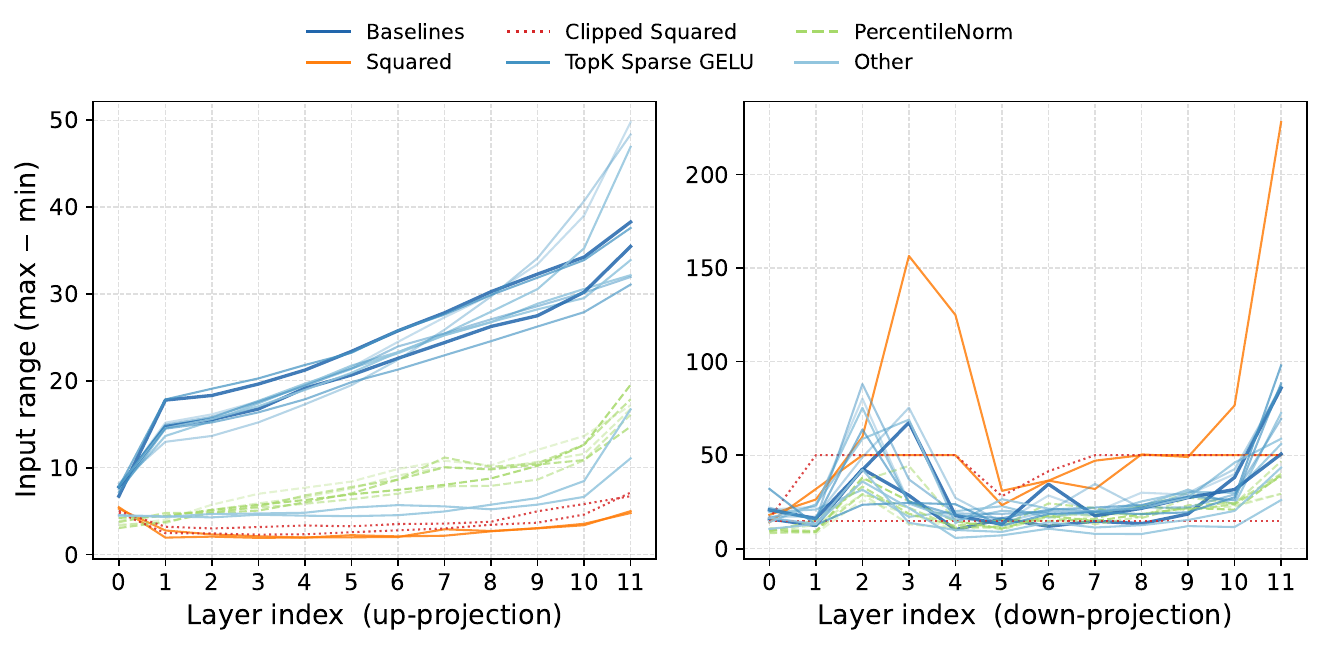}
    \caption{Input ranges into up- and down-projections across layer indices, aggregated over 21 runs for different activation functions, normalization and sparsification strategies. Runs with non clipped squared functions are excluded. }
    \label{fig:spikes_and_activation}
\end{figure*}

\paragraph{Origin of spikes:}
In our architecture, the MLP block takes the following form:

\begin{verbatim}
def forward(self, x):
    x = self.up_projection(x)
    x = self.activation(x)
    x = self.down_projection(x)
    return x
\end{verbatim}
As shown in Figure~\ref{fig:spikes_and_activation}, there are no spikes in the inputs to
the \textbf{up-projection} (right), they emerge in the values entering the \textbf{down-projection} (left), precisely
after the activation is applied. Since there is no gating interaction in our MLP block,
the spike must originate in the up-projection itself, where a small subset of neurons
produces anomalously large pre-activation values, which are further amplified
by the nonlinearity. We perform an extended statistical analysis of weights and
activations over all 23 runs in \S\ref{app:gpt_extended}, where further support this conclusion.
\paragraph{Spikes and normalization.}
Figure~\ref{fig:spikes_and_activation} reveals that spike ranges shrink when
normalization with centering or quantile shift is applied. The same pattern holds
for input ranges to the up-projection, which are also smaller under squared activations
and under models that use centering or quantile shift. This points to an intricate
interaction between normalization layers and activation functions, on top of the
weight-drift mechanism discussed in \S\ref{sec:weight_drift}.

\paragraph{Squared activations do not \emph{introduce} spikes  they \emph{amplify}}
an already-existing tendency. As shown in Table~\ref{fig:spikes_and_activation}, baseline
activations (ReLU, GELU) exhibit moderate spikes at Layer 2 (around 20--43), while
ReLU$^2$ produces values exceeding 1000 at the same layer an amplification of
more than one order of magnitude. This pathological growth is a direct consequence
of squaring large pre-activation values produced by the up-projection. 


\begin{table*}[ht]
    \centering
    \caption{\textbf{Loss} and  per-layer down-projection input range (max$-$min) at different GPT-nano layers.}
    \label{tab:activations_range}
    \begin{tabular}{lcccccccc}
    \toprule
        & \textbf{ReLU} & \textbf{GELU} & \textbf{NReLU} & \textbf{SUGARBSiLU} & $\textbf{ReLU}^2$ & $\textbf{GELU}^2$ & $\textbf{ReLU}^2_{\text{clip15}}$ & $\textbf{ReLU}^2_{\text{clip50}}$ \\
    \midrule
    Val loss $\downarrow$: & 3.290 & 3.260 & 3.287 & 3.291 & 3.251 & \textbf{3.233} & 3.242 & 3.236 \\
    \midrule
    \quad Layer 1 & 16.2 & 12.4 & 23.3 & 23.7 & \textbf{108.5} & 26.3 & 15.0 & 50.0 \\
    \quad Layer 2 & 42.2 & 42.7 & 75.3 & 58.7 & \textbf{1055.0} & 59.8 & 15.0 & 50.0 \\
    \quad Layer 3 & 67.5 & 28.5 & 24.2 & 69.0 & \textbf{260.9} & 156.4 & 15.0 & 50.0 \\
    \quad Layer 4 & 17.8 & 10.2 & 15.3 & 13.7 & 49.8 & 124.9 & 15.0 & 50.0 \\
    \quad Layer 5 & 13.1 & 16.0 & 20.3 & 26.5 & 65.3 & 31.0 & 15.0 & 28.1 \\
    \bottomrule
    \end{tabular}

\end{table*}




\textbf{Clipped ReLU$^2$ and GELU$^2$ improve performance.}
To evaluate whether this extreme amplification impairs model performance, we clip
ReLU$^2$ activations at two thresholds: 15 and 50. Surprisingly,
ReLU$^2_{\text{clip50}}$ improves over unclipped ReLU$^2$ \textbf{validation loss}
$3.236$ vs.\ $3.251$ confirming that the most extreme spike values are harmful
rather than informative. ReLU$^2_{\text{clip15}}$, however, degrades performance
relative to ReLU$^2_{\text{clip50}}$ ($3.242$ vs.\ $3.236$), suggesting that
aggressive clipping suppresses informative large activations. Notably, GELU$^2$ achieves the best overall performance
($3.233$) while producing substantially lower spikes than ReLU$^2$, suggesting
it could be a good starting point for ReLUfication toward ReLU$^2$. 

\textbf{Results for ViT.} To verify that the observed effects are not specific only  to GPT-like models, we evaluate squared activations on a larger Vision Transformer (MaxViT). Table~\ref{tab:maxvit_results} in the Appendix reports the aggregated accuracy and sparsity metrics. We observe that $\text{ReLU}^2_{\text{clip50}}$ outperforms $\text{ReLU}^2$ (69.63\% and 62.38\%, respectively) suggesting presence of pathological amplification. However, no squared or clipped function outperformed plain $\text{GELU}$, which achieved 70.30\% performance, suggesting that squared activation functions may be beneficial only for autoregressive models, or a different clipping threshold may be required.

\section{From Weight Drift to Computational Efficiency}
\label{sec:efficiency}
Dynamic computation of percentiles and centering statistics at every forward pass
introduces non-negligible overhead. Quantile algorithms are poorly parallelisable on
GPUs~\citep{cederman2010gpu, satish2009designing}, and even standard LayerNorm is a
measurable bottleneck, \citet{kanavalau2026gated} report a $1.22\times$ throughput gain
from its removal in GPT models. Fortunately, the dynamic computation is not strictly
necessary throughout training. In \S\ref{sec:weight_drift} we demonstrated that weights
stabilise after the initial iterations, and we exploit this property by replacing the
dynamic statistics with fixed values once the network has settled.

Concretely, during a warm-up phase we track centering and percentile statistics with an
EMA scheme analogous to the running statistics in BatchNorm~\citep{ioffe2015batch}:
$\bar{v}_i = \gamma \bar{v}_{i-1} + (1-\gamma) v_i$ with $\gamma = 0.9999$. After
$T_{\mathrm{warm}}$ steps we apply \emph{Accumulation Stop} (AS), freezing the buffer
and eliminating dynamic computation entirely. We evaluate this approach on DiT-S/2
and MaxViT, both of which operate on fixed-size inputs. We note that this approach
is less directly applicable to autoregressive models such as GPT, where activation
statistics vary with sequence position and context length, which may make frozen
thresholds less reliable across generation lengths.


\begin{figure}[h]
    \centering
    \includegraphics[width=1\linewidth]{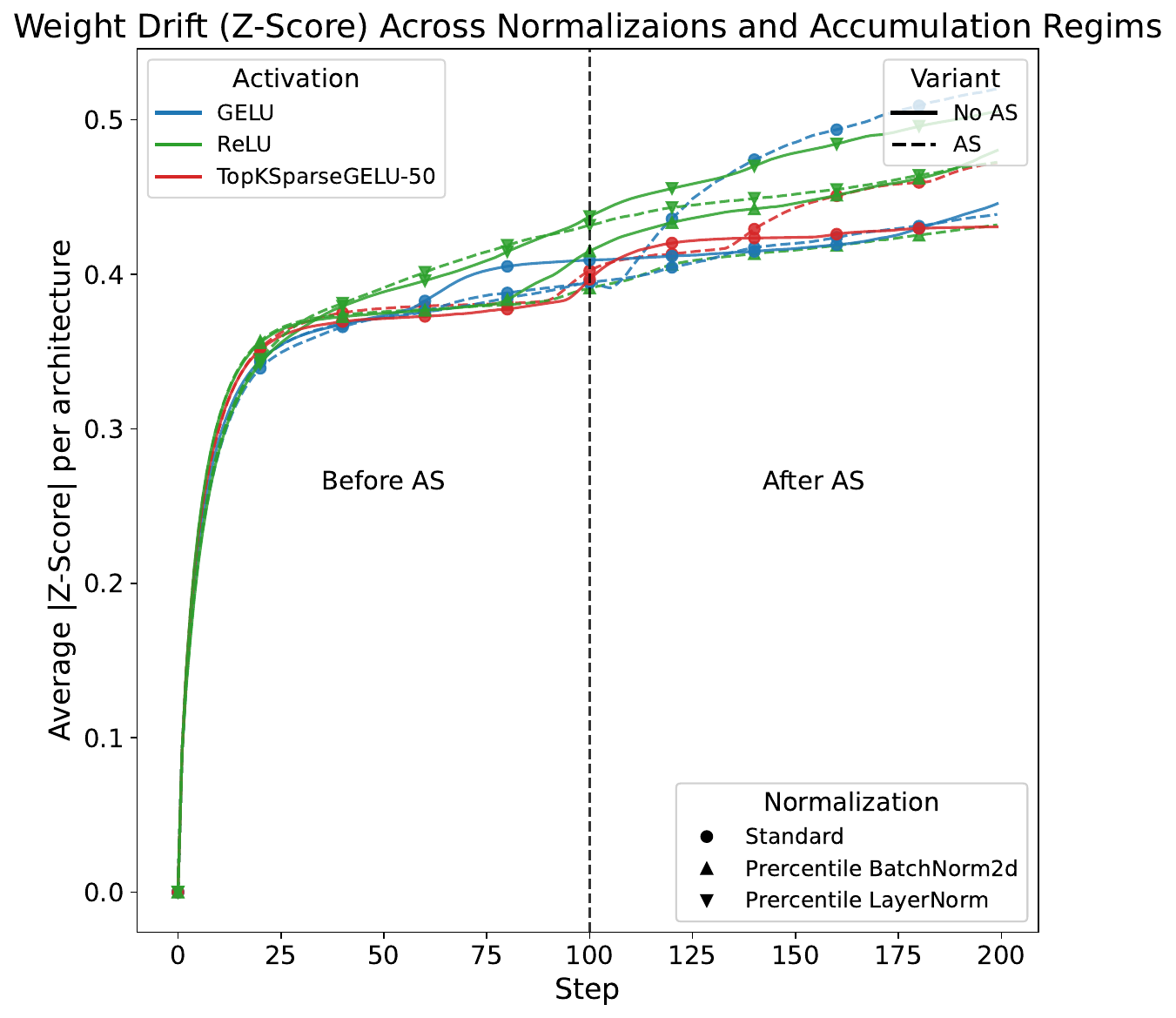}
    \caption{
        \textbf{Weight drift before and after Accumulation 
        Stop.} Average weight drift (mean $|$Z-score$|$) 
        for GELU, ReLU, and TopKSparseGELU-50 under 
        Standard, PercentileBN, and PercentileLN 
        normalization strategies. The dashed vertical line 
        marks the AS boundary at step 100. Solid lines: 
        without AS. Dashed lines: with frozen EMA 
        statistics. Trajectories remain continuous and 
        stable after the boundary across all configurations.
    }
    \label{fig:weight_drift_as}
\end{figure}

\paragraph{Hardware and training configuration.}
DiT-S/2 experiments are conducted on Nvidia H200 (141\,GB) 
GPUs using ImageNet-1K images resized to $256{\times}256$. 
MaxViT experiments use Nvidia A100 (80\,GB) GPUs. All models 
are trained in bfloat16 mixed precision with percentile and 
centering thresholds computed channel-wise and averaged over 
the batch. The baseline uses PyTorch's optimised LayerNorm 
and BatchNorm kernels; our AS implementation uses plain 
PyTorch without custom CUDA kernels, so all reported 
throughput gains are conservative lower bounds.

\paragraph{Warm-up and measurement protocol.}
The warm-up phase runs for $T_{\mathrm{warm}}=50{,}000$ steps 
for DiT-S/2. For MaxViT-Tiny, warm-up and throughput measurement 
follow the four-phase protocol described in 
\S\ref{sec:maxvit_details}. For DiT-S/2, throughput 
is measured as batches per second averaged over 1{,}000 steps 
after warm-up, with GPU synchronization performed before 
measurement.

\subsection{ Results for Computational Efficiency}
\label{app:efficiency_extended}

\paragraph{Throughput results.}
Dynamic computation 
reduces throughput by ${\sim}25$--$30\%$ during warm-up, 
once statistics are frozen, throughput fully recovers to 
near-baseline (optimized LN) levels with our naive PyTorch implementation, 
suggesting that an optimized kernel could be faster (Table~\ref{tab:throughput}). 


\begin{table}[h]
\centering
\caption{Training throughput (batches/sec, $\uparrow$) for 
DiT-S/2 (H200) and MaxViT-Tiny (A100) before and after 
Accumulation Stop ($T_{\mathrm{warm}}=50{,}000$ steps, 
non-optimised implementation).}
\label{tab:throughput}
\begin{tabular}{llcc}
\toprule
\textbf{Activation} & \textbf{Norm} & 
\textbf{Accum.} & \textbf{Fixed} \\
\midrule
\multicolumn{4}{l}{\textit{DiT-S/2 --- baseline: 3.42 
(GELU + PyTorch LN)}} \\
\midrule
Percentile GELU & LN    & 2.47 & 3.43 \\
GELU            & Percentile LN & 2.47 & 3.41 \\
ReLU            & Percentile LN & 2.58 & 3.53 \\
\midrule
\multicolumn{4}{l}{\textit{MaxViT --- baseline: 1.59 
(GELU + PyTorch BN)}} \\
\midrule
GELU            & BN         & 1.55 & \textbf{2.09} \\
Percentile GELU & BN    & 1.34 & \textbf{2.13} \\
ReLU            & Percentile BN & 1.14 & \textbf{1.73} \\
\bottomrule
\end{tabular}
\end{table}

\paragraph{Quality preservation.}
Table~\ref{tab:quality} confirms that model quality is preserved after freezing
statistics. For DiT-S/2 we report FID and IS computed on 50{,}000 generated samples
using the EMA model with Classifier-Free Guidance ($s = 1.5$) and 250 sampling steps.
Interestingly, we observe an improvement when DiT is trained with Percentile
Shift instead of standard LayerNorm, with FID dropping from 49.40 to 48.21. Given
the 50\% shift, the only difference between Percentile LN and standard LN is that the
former uses quantile-based shifting while the latter uses mean centering. One detail
worth noting is that quantile computation in PyTorch propagates gradients only through
a single quantile value. While we do not yet have a full explanation for this result,
the direction warrants its own investigation. For MaxViT-Tiny we report top-1 accuracy
on the ImageNet-1K validation set, all metrics slightly degrade within 2\% relative change of
the corresponding baseline from 70.30\% to 69.75\%.


\begin{table}[h]
\centering
\caption{Model quality when Accumulation Stop is applied. DiT-S/2 
reports FID$\downarrow$ / IS$\uparrow$ (\textbf{baseline: 49.40 / 
29.85}.), MaxViT reports top-1 accuracy$\uparrow$ 
(baseline: 70.30\%).}
\label{tab:quality}
\resizebox{\linewidth}{!}{%
\begin{tabular}{llc}
\toprule
\textbf{Activation} & \textbf{Norm} & \textbf{Score} \\
\midrule
\multicolumn{3}{l}{\textit{DiT-S/2}} \\
\midrule
Percentile GELU (50\%) & LN     & 52.33 / 28.87 \\
GELU                   & Percentile Shift (50\%) & \textbf{48.21} / \textbf{31.41} \\
ReLU                   & Percentile Shift (50\%) & 51.44 / 29.00 \\
\midrule
\multicolumn{3}{l}{\textit{MaxViT}} \\
\midrule
GELU                   & BN                 & 69.75\% \\
Percentile GELU (50\%) & BN          & 69.96\% \\
ReLU                   & Percentile BN (50\%) & 69.74\% \\
\bottomrule
\end{tabular}%
}
\end{table}

\paragraph{Weight drift stability across the AS boundary.}
Figure~\ref{fig:weight_drift_as} plots the average 
$|$Z-score$|$ of weight drift before and after the AS 
boundary. Prior to AS, all configurations converge to 
${\approx}0.40$ by step 100 regardless of activation or 
normalization choice. After the freeze, AS and No-AS 
trajectories remain closely aligned with no discontinuity, 
confirming that the transition to fixed EMA statistics 
preserves training stability.

\vspace{-0.5em}
\section{Limitations and Discussion}
\label{sec:limitations}
\vspace{-0.5em}
We summarize the main limitations of our work below and each is discussed in greater detail in Appendix~\ref{app:discussion}. \textbf{(1)} Our theoretical results assume zero-mean i.i.d.\ weights and hold strictly only at initialization, and the \textbf{(2)} formal proof covers ReLU while its extension to smooth activations is supported empirically rather than formally.


\textbf{(3)} Language modeling is evaluated only on FineWeb with GPT-nano (124M parameters), clipping thresholds for squared activations are selected empirically and appear architecture-dependent.   Accumulation Stop is evaluated on architectures with fixed-size inputs. \textbf{(4)} Reported throughput  gains for Accumulation Stop  use a naive PyTorch implementation against an optimized baseline and are therefore conservative lower bounds.  Several of our empirical findings remain open questions that fall outside the scope of this paper. \textbf{(5)}  The strong robustness of transformer architectures (ViT, GPT-nano) to aggressive sparsification is not explained by our analysis, and skip connections alone do not fully account for it. \textbf{(6)}  Squared activation functions yield clear gains on autoregressive GPT-nano but fail to outperform plain GELU on MaxViT, suggesting a modality- or objective-specific interaction that we do not characterize. \textbf{(7)}  Likewise, the improved FID and IS scores observed for DiT-S/2 under Percentile LayerNorm with GELU point to a beneficial interaction between centering at a non-zero percentile and generative training that we leave for future investigation.

\section{Conclusion}
Our findings reframe activation sparsity as a controllable consequence of the loss--activation--normalization triple rather than an emergent property of data. The sharp ${\sim}70\%$ cliff demonstrates that aggressive activation sparsity can be achieved without significantly degrading quality. More broadly, our results offer a mechanistic understanding of how seemingly incidental design choices, particularly the move toward non-centering normalization in modern LLMs, actively shape optimization dynamics, and understanding these interactions offers practical insights for future architecture development.

\section{Related Work}
\label{sec:related_work}

\subsection{Weight Drift and Internal Covariate Shift}

The concept of \emph{internal covariate shift} was introduced by
\citet{ioffe2015batch} to describe the continuously changing
distribution of layer inputs during training, as upstream weights
update, the activations they produce shift, forcing downstream
layers to perpetually adapt to a moving target. Batch Normalization
(BN) was proposed as a remedy by normalizing layer inputs to zero
mean and unit variance, with subsequent work establishing additional
benefits beyond distributional stability~\citep{santurkar2018does}. Our work touches the same phenomenon but from a different angle.
Rather than studying how changing activations force downstream
weights to adapt, we study how positively biased activation
functions \emph{cause} weights to drift systematically negative.
The most important connection is that while mean-centering
normalization was developed to overcome covariate shift, it also
incidentally suppresses weight drift by removing the positive
pre-activation bias that drives it. However, modern architectures
increasingly adopt normalization layers without centering 
such as RMSNorm~\citep{zhang2019root}  meaning weight drift
is likely to occur in these settings.

\subsection{Normalization Layers and Centering in Modern architectures}
\label{app:normalizations}
Most contemporary large language models replace LayerNorm~\citep{ba2016layer} with
RMSNorm~\citep{zhang2019root}, which rescales activations by their root-mean-square
but does not subtract the mean. This includes the LLaMA family~\citep{touvron2023llama} and most other open-weight LLMs released after 2023 (Mistral, Gemma, Qwen, DeepSeek).
The original motivation was efficiency, \citet{zhang2019root} reported that the centering
step contributed little to model quality while accounting for noticeable computational
cost. Vision and generative-image architectures present a more heterogeneous picture.
Classical ViTs~\citep{tu2022maxvit} retain LayerNorm, and U-Net diffusion
backbones rely on GroupNorm, both of which center activations. The original Diffusion
Transformer~\citep{peebles2023dit} and its successors use adaptive LayerNorm (adaLN)
for timestep and class conditioning, again preserving centering on the main residual
path. Moreover, recent large-scale diffusion transformers have begun adopting RMSNorm
in the attention pathway. Stable Diffusion~3 and SD~3.5~\citep{esser2024sd3} and
the FLUX family~\citep{blackforestlabs2024flux} apply RMSNorm to query and key
projections to stabilize mixed-precision training, while retaining adaLN elsewhere.

\subsection{Emergent Activation Sparsity}
\citet{li2022lazy} documented that activation sparsity emerges 
in trained \textbf{transformers} without explicit regularization and 
persists on random data, implicating the optimizer rather than 
the data distribution as the causal factor. Our work can be seen 
as both an extension and a deepening of this finding. 

Where \citet{li2022lazy} focus on transformers, we demonstrate 
that the same phenomenon arises across diverse architectures 
(MLP, ResNet, MP-SENet, ViT, GPT) and activation functions 
(ReLU, GELU, SiLU). Crucially, we show that weight drift is 
governed not by the architecture as a whole, but by the local 
interaction between the activation function, the normalization 
layer, and the loss function making it a property of the 
optimization dynamics rather than of any particular model 
family. We identify \emph{negative weight drift} as the precise 
mechanistic cause of activation sparsity. On the 
theoretical side, our proof of gradient positivity extends that 
of \citet{li2022lazy} in two respects: we establish the result 
for \emph{any} intermediate layer rather than only the final 
two, and we operate under strictly weaker assumptions by 
permitting non-negative off-diagonal weight correlations rather 
than requiring them to be zero. Empirically, we go beyond characterizing sparsity as a 
phenomenon by evaluating whether naturally arising sparsity 
hurts model performance, identifying a sharp accuracy cliff 
above ${\sim}70\%$ sparsity across 79 configurations.

\subsection{Accelerating LLMs with sparse activations}
Pruning is most effective when the values to be pruned are 
already near zero, which is naturally the case for the 
intermediate representations of many LLMs~\citep{liu2023deja, 
li2022lazy}. This has motivated a line of work exploiting 
activation sparsity to accelerate the decoding stage via 
faster \emph{sparse vector}--\emph{dense matrix} 
multiplications, achieving up to $2\times$ speedup using 
specialized kernels~\citep{song2024turbo, song2024powerinfer, 
liu2024training, lee2024cats}. These gains are most pronounced 
at batch size 1 and diminish as batch size 
increases~\citep{shrestha2025polar}, since the speedup arises 
by skipping rows of the weight matrix that correspond to zero 
activations an advantage that erodes under batched 
computation. Many of these methods additionally require a 
predictive mechanism to prefetch the relevant weight indices 
into memory ahead of time~\citep{liu2023deja}. Our work 
complements this line by identifying the optimization dynamics 
responsible for producing the sparsity these methods depend on, 
and by characterizing the sparsity--accuracy tradeoff that 
determines how aggressively it can be exploited without 
degrading model quality.

\subsection{Activation Spikes at Intermediate Layers}
\citet{sun2026spike} studied massive activation spikes in 
SwiGLU-based transformers, attributing them to a directional 
quadratic amplification mechanism arising from the interaction 
between gate and up projections. We independently observed 
qualitatively similar spikes in standard non-gated MLP blocks, 
a finding we arrived at separately while studying the effects 
of squared nonlinearities on GPT-nano. The fact that spikes 
arise in both gated and non-gated architectures suggests that 
intermediate-layer spiking is a general transformer property 
rather than a SwiGLU-specific artifact as suggested by \citet{sun2026spike}. This is further 
supported by \citet{yusupov2025geometric}, who observe 
consistent early-layer spikes in geometric properties of 
internal representations including Maximum Explainable 
Variance, Effective Rank, and Intrinsic Dimensionality 
across multiple models. We contribute 
two further observations: (1) normalization reduces but does not 
eliminate the spikes, (2) the phenomenon originates in 
the learned weight matrices themselves and (3) controlling spike 
amplitude directly improves downstream performance.

\bibliography{main}
\bibliographystyle{unsrtnat}

\clearpage
\appendix

\section*{Appendix}
\label{sec:appendix}
\makeatletter
\gdef\addcontentsline#1#2#3{%
  \addtocontents{#1}{\protect\contentsline{#2}{#3}{\thepage}{}}%
}
\makeatother


{\noindent\textbf{\small Appendix Contents:}\par}
\vspace{0.5ex}
{
  \small 
  \setlength{\parskip}{0pt}
  \etocsetnexttocdepth{subsection}
  
  \etocsettocstyle{}{} 
  
  \localtableofcontents
}
\vspace{2ex}


\section{Theorem \& Proof: Positive Expected Gradient for MSE loss}
\label{sec:mse_proof}
\begin{theorem}[MSE loss]
Let $f(\mathbf{x}) = \mathbf{V}_{\mathrm{eff}}^{(l)}\,\sigma(\mathbf{p}^{(l)})$
be as in~\eqref{eq:folded}, with $\sigma = \mathrm{ReLU}$ and
$\mathbf{V}_{\mathrm{eff}}^{(l)}$ satisfying
Theorem~\ref{th:veff_properties}. Assume the network is at initialization,
so that $\mathbf{V}_{\mathrm{eff}}^{(l)}$ is independent of
$\mathbf{p}^{(l)}$ and of the target $\mathbf{y}$. Consider the MSE loss
$\ell(f(\mathbf{x}), \mathbf{y}) = \tfrac{1}{2}\|f(\mathbf{x}) - \mathbf{y}\|_2^2$.
Then for any neuron $i$,
\begin{equation}
    \mathbb{E}\!\left[\frac{\partial \ell}{\partial p_i^{(l)}}\right] \geq 0,
\end{equation}
with strict inequality whenever $p_i^{(l)} > 0$, where the expectation
is taken over $\mathbf{V}_{\mathrm{eff}}^{(l)}$.
\end{theorem}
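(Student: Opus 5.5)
We compute the gradient directly via the chain rule and then take the expectation over $\mathbf{V}_{\mathrm{eff}}^{(l)}$, using the independence assumptions to split it into two terms. Write $\mathbf{h} = \sigma(\mathbf{p}^{(l)})$, so $f = \mathbf{V}_{\mathrm{eff}}\mathbf{h} = \sum_j h_j \mathbf{u}_j$, where $\mathbf{u}_j$ is the $j$-th column of $\mathbf{V}_{\mathrm{eff}}$. These columns are the vectors whose pairwise inner products are controlled by Theorem~\ref{th:veff_properties}; the statement there is phrased via $\mathbf{v}_i$, and I will read it as a property of the vectors multiplying each hidden unit, as in the proof's survival argument. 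For MSE,
\[
\frac{\partial \ell}{\partial p_i^{(l)}} = \sigma'(p_i)\,\mathbf{u}_i^\top\bigl(f(\mathbf{x})-\mathbf{y}\bigr)
= \mathbb{1}[p_i>0]\Bigl(\sum_j h_j\,\mathbf{u}_i^\top\mathbf{u}_j - \mathbf{u}_i^\top\mathbf{y}\Bigr).
\]
If $p_i \le 0$, the derivative vanishes identically (taking the subgradient $0$ at $p_i = 0$), so the expectation is $0$ and the non-strict claim holds. It remains to treat $p_i>0$.

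For $p_i>0$ I take the expectation over $\mathbf{V}_{\mathrm{eff}}$ with $\mathbf{p}^{(l)}$ and $\mathbf{y}$ held fixed, which the initialization independence assumption permits. For the target term, independence of $\mathbf{V}_{\mathrm{eff}}$ and $\mathbf{y}$ together with $\mathbb{E}[\mathbf{u}_i]=\mathbf{0}$ (first part of Theorem~\ref{th:veff_properties}) gives $\mathbb{E}[\mathbf{u}_i^\top\mathbf{y}] = \mathbb{E}[\mathbf{u}_i]^\top\mathbf{y} = 0$. For the self-interaction term, since $h_j = \mathrm{ReLU}(p_j)\ge 0$ is fixed, $\mathbb{E}\bigl[\sum_j h_j\mathbf{u}_i^\top\mathbf{u}_j\bigr] = \sum_j h_j\,\mathbb{E}[\langle\mathbf{u}_i,\mathbf{u}_j\rangle]$. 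Each summand is non-negative by the second part of Theorem~\ref{th:veff_properties}. Combining the two terms gives $\mathbb{E}[\partial\ell/\partial p_i]\ge 0$.

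For strictness, I isolate the diagonal term $j=i$, which equals $h_i\,\mathbb{E}\|\mathbf{u}_i\|^2$ with $h_i = p_i>0$. So it suffices that $\mathbb{E}\|\mathbf{u}_i\|^2>0$, i.e. that $\mathbf{u}_i$ is not almost surely zero. This follows because the weights are non-degenerate (nonzero variance) and, for the given input, at least one path through the downstream active gates $\mathbf{D}_k$ is open. In that case $\mathbb{E}\|\mathbf{u}_i\|^2$ is a positive sum of products of weight variances along open paths. If every downstream gate were closed, $\mathbf{u}_i\equiv 0$ and the gradient would be trivially zero. I would add this non-degeneracy as an implicit assumption, or note that it holds almost surely for generic inputs.

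I expect the main obstacle to be the justification that taking the expectation over $\mathbf{V}_{\mathrm{eff}}$ with the gates $\mathbf{D}_k$ treated as fixed is consistent with Theorem~\ref{th:veff_properties}. There the gates depend on the downstream weights themselves, which is what the survival-conditioning argument handles. I would resolve this by conditioning on the activation pattern throughout: the zero-mean property is inherited from the outermost matrix $\mathbf{W}_L$, which never enters any gate, and the non-negative cross-correlation is exactly what the theorem supplies. The linear terms above then need only these two moments, with no further independence between $\mathbf{u}_i$ and $\mathbf{u}_j$.
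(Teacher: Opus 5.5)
Your proof is correct and follows the paper's argument essentially step for step. Like the paper, you expand $\partial\ell/\partial p_i$ via the column decomposition $f=\sum_j \mathbf{v}_j\,\sigma(p_j)$, eliminate the target term using $\mathbb{E}[\mathbf{v}_i]=\mathbf{0}$ and independence from $\mathbf{y}$, bound the remaining sum termwise by $\mathbb{E}[\langle\mathbf{v}_i,\mathbf{v}_j\rangle]\ge 0$ and $\sigma(p_j)\ge 0$, and obtain strictness from the diagonal term $p_i\,\mathbb{E}\|\mathbf{v}_i\|^2$. The one place you go beyond the paper is the caveat that $\mathbb{E}\|\mathbf{v}_i\|^2>0$ requires a non-degeneracy condition (an open downstream path); the paper simply asserts this, so stating it explicitly makes the argument more complete.
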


\begin{proof}
Let $i, j \in \{1, \dots, d_p\}$ index the neurons of
layer $l$, and let $\mathbf{v}_j$ denote the $j$-th column of
$\mathbf{V}_{\mathrm{eff}}^{(l)}$. If $p_i^{(l)} \leq 0$ then
$\sigma'(p_i) = 0$ and $\partial \ell / \partial p_i = 0$, so the bound
holds. We assume $p_i^{(l)} > 0$ for the remainder, in which
case $\sigma'(p_i) = 1$. Writing the matrix–vector product
columnwise gives
$f(\mathbf{x}) = \sum_{j=1}^{d_p} \mathbf{v}_j \, \sigma(p_j)$, and only
the $j = i$ term depends on $p_i$, so
$\partial f / \partial p_i = \mathbf{v}_i \, \sigma'(p_i)$. Applying the
chain rule to the MSE loss and substituting,
\begin{equation}
    \frac{\partial\ell}{\partial p_i}
    = \sigma'(p_i) \sum_{j=1}^{d_p}
        \langle\mathbf{v}_i,\mathbf{v}_j\rangle\,\sigma(p_j)
      \;-\; \sigma'(p_i)\,\langle\mathbf{v}_i,\mathbf{y}\rangle.
    \label{eq:grad_expanded}
\end{equation}

By taking the expectation over $\mathbf{V}_{\mathrm{eff}}^{(l)}$, since $\mathbf{y}$ and the $p_j$ are independent
of $\mathbf{V}_{\mathrm{eff}}^{(l)}$ at initialization. The label term therefore vanishes,
\begin{equation}
    \mathbb{E}\!\left[\langle \mathbf{v}_i, \mathbf{y}\rangle\right]
    = \langle \mathbb{E}[\mathbf{v}_i], \mathbf{y}\rangle = 0,
\end{equation}
using $\mathbb{E}[\mathbf{v}_i] = \mathbf{0}$ from
Theorem~\ref{th:veff_properties}, leaving
\begin{equation}
    \mathbb{E}\!\left[\frac{\partial\ell}{\partial p_i}\right]
    = \sum_{j=1}^{d_p}
        \mathbb{E}\!\left[\langle\mathbf{v}_i,\mathbf{v}_j\rangle\right]\,
        \sigma(p_j).
\end{equation}

Each $\sigma(p_j) \geq 0$ since
$\sigma = \mathrm{ReLU}$, and
$\mathbb{E}[\langle\mathbf{v}_i,\mathbf{v}_j\rangle] \geq 0$ by
Theorem~\ref{th:veff_properties}, so every summand is non-negative. The
$j = i$ term is strictly positive, since $\mathbb{E}[\|\mathbf{v}_i\|^2] > 0$
and $\sigma(p_i) = p_i > 0$. Thus
$\mathbb{E}[\partial\ell/\partial p_i] \geq 0$, with strict inequality
whenever $p_i^{(l)} > 0$.
\end{proof}

\begin{remark}
When averaging over inputs, a covariance term 
$\mathrm{Cov}_{\mathbf{x}}(\partial\ell/\partial p_i,\; 
x_k^{(l-1)})$ arises. However, we further empirically demonstrate that this term oscillates around zero at early iterations, results are present in Figure~\ref{fig:mlp_ce_cov}. 
Here, $\mathbf{x}$ denotes the input data vector (or layer input $\mathbf{x}^{(l-1)}$), while $\mathbf{p}^{(l)}$ is the pre-activation at layer $l$, given by $\mathbf{p}^{(l)} = \mathbf{W}^{(l)} \mathbf{x}^{(l-1)} + \mathbf{b}^{(l)}$. 
The covariance is taken over the empirical/data distribution of $\mathbf{x}$.
\end{remark}

\section{Theorem \& Proof: Positive Expected Gradient for Cross-Entropy Loss}
\label{sec:ce_proof}

\begin{theorem}[Cross-entropy loss]
Let $f(\mathbf{x}) = \mathbf{V}_{\mathrm{eff}}^{(l)}\,\sigma(\mathbf{p}^{(l)})$
be as in~\eqref{eq:folded}, with $\sigma = \mathrm{ReLU}$ and
$\mathbf{V}_{\mathrm{eff}}^{(l)} \in \mathbb{R}^{C \times d_p}$ satisfying
Theorem~\ref{th:veff_properties}, where $C$ denotes the number of output
classes. Assume the network is at initialization, so that
$\mathbf{V}_{\mathrm{eff}}^{(l)}$ is independent of $\mathbf{p}^{(l)}$ and
of the one-hot label $\mathbf{y}$. Consider the cross-entropy loss
$\ell(f(\mathbf{x}), \mathbf{y}) = -\sum_{c=1}^{C} y_c \log s_c$,
where $\mathbf{s} = \mathrm{softmax}(f(\mathbf{x}))$. Then for any
neuron $i$,
\begin{equation}
    \mathbb{E}\!\left[\frac{\partial \ell}{\partial p_i^{(l)}}\right] \geq 0,
\end{equation}
(up to higher-order
corrections from the softmax linearization), with strict inequality whenever $p_i^{(l)} > 0$, where the expectation
is taken over $\mathbf{V}_{\mathrm{eff}}^{(l)}$.
\end{theorem}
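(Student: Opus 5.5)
The plan follows the MSE proof of Theorem~\ref{th:pos_p_gradient} step by step. The only new work is to linearize the softmax so that the cross-entropy gradient takes the same shape as~\eqref{eq:grad_expanded}.

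First, the trivial case: if $p_i^{(l)} \le 0$ then $\sigma'(p_i)=0$ and the gradient vanishes, so we assume $p_i>0$ and $\sigma'(p_i)=1$. As before, $\partial f/\partial p_i = \mathbf{v}_i$, where $\mathbf{v}_i$ is the $i$-th column of $\mathbf{V}_{\mathrm{eff}}^{(l)}$. The softmax cross-entropy gradient with respect to the logits is $\mathbf{s}-\mathbf{y}$, which gives
\[
\frac{\partial \ell}{\partial p_i} = \langle \mathbf{v}_i, \mathbf{s}-\mathbf{y}\rangle .
\]

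Next, expand the softmax around $\mathbf{f}=\mathbf{0}$:
\[
\mathbf{s} = \tfrac{1}{C}\mathbf{1} + \tfrac{1}{C}\bigl(\mathbf{I}-\tfrac{1}{C}\mathbf{1}\mathbf{1}^\top\bigr)\mathbf{f} + O(\|\mathbf{f}\|^2).
\]
Substituting $\mathbf{f}=\sum_j \mathbf{v}_j\sigma(p_j)$ yields three contributions.
\begin{itemize}
\item A constant term $\langle \mathbf{v}_i, \tfrac1C\mathbf{1}-\mathbf{y}\rangle$. Since $\mathbf{y}$ is independent of $\mathbf{V}_{\mathrm{eff}}$ and $\mathbb{E}[\mathbf{v}_i]=\mathbf{0}$ by Theorem~\ref{th:veff_properties}, this term has zero expectation.
\item A quadratic term $\tfrac1C\sum_j \langle \mathbf{v}_i,\mathbf{v}_j\rangle\sigma(p_j)$. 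Its expectation is non-negative by the second statement of Theorem~\ref{th:veff_properties} together with $\sigma(p_j)\ge 0$. It is strictly positive because the $j=i$ summand contributes $\mathbb{E}\|\mathbf{v}_i\|^2 p_i>0$.
\item A centering term $-\tfrac1{C^2}\sum_j (\mathbf{1}^\top\mathbf{v}_i)(\mathbf{1}^\top\mathbf{v}_j)\sigma(p_j)$.
\end{itemize}

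The main obstacle is the centering term, which enters with a negative sign. I would combine it with the quadratic term. For each $j$ the two together give
\[
\tfrac1C\,\mathbb{E}\bigl[\mathbf{v}_i^\top(\mathbf{I}-\tfrac1C\mathbf{1}\mathbf{1}^\top)\mathbf{v}_j\bigr].
\]
Here $\mathbf{I}-\tfrac1C\mathbf{1}\mathbf{1}^\top$ is the orthogonal projector $\mathbf{P}$ onto $\mathbf{1}^\perp$.

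For $j=i$ this is $\tfrac1C\mathbb{E}\|\mathbf{P}\mathbf{v}_i\|^2 \ge 0$. It is strictly positive whenever $C\ge 2$ and $\mathbf{v}_i$ is not almost surely parallel to $\mathbf{1}$. This holds for a nondegenerate i.i.d.\ last layer $\mathbf{W}_L$, because the rows of $\mathbf{W}_L$ enter independently.

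For $j\neq i$ I need $\mathbb{E}[\mathbf{v}_i^\top \mathbf{P}\mathbf{v}_j]\ge 0$. I would argue this by conditioning on everything except $\mathbf{W}_L$. Then $\mathbf{v}_j=\mathbf{W}_L\mathbf{u}_j$, where $\mathbf{u}_j$ denotes the $j$-th column of the inner product $\mathbf{D}_{L-1}\mathbf{W}_{L-1}\cdots\mathbf{D}_l\mathbf{W}_{l+1}$. Independence of the entries of $\mathbf{W}_L$ gives $\mathbb{E}[\mathbf{W}_L^\top\mathbf{P}\mathbf{W}_L]=\tau^2\operatorname{tr}(\mathbf{P})\,\mathbf{I}=\tau^2(C-1)\mathbf{I}$. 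The centering term therefore only rescales $\mathbb{E}\langle\mathbf{u}_i,\mathbf{u}_j\rangle$ by the factor $(C-1)/C$. That inner expectation is non-negative by the survival-conditioning argument in the proof of Theorem~\ref{th:veff_properties}.

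Finally, all neglected terms are $O(\|\mathbf{f}\|^2)$, which is small at initialization. Collecting the pieces gives $\mathbb{E}[\partial\ell/\partial p_i]\ge 0$ up to these corrections, with strict inequality when $p_i>0$.
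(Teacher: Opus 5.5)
Your proposal is correct and follows the paper's proof essentially step for step. You linearize the softmax at $\mathbf{f}=\mathbf{0}$, cancel the constant and label terms using $\mathbb{E}[\mathbf{v}_i]=\mathbf{0}$, and show $\mathbb{E}[\mathbf{v}_i^\top\mathbf{P}\mathbf{v}_j]=\tfrac{C-1}{C}\,\mathbb{E}[\langle\mathbf{v}_i,\mathbf{v}_j\rangle]\ge 0$ by conditioning on everything except $\mathbf{W}_L$; your identity $\mathbb{E}[\mathbf{W}_L^\top\mathbf{P}\mathbf{W}_L]=\tau^2(C-1)\mathbf{I}$ is the paper's row-decorrelation computation in compact form, and your explicit strictness condition ($C\ge 2$, nondegenerate $\mathbf{v}_i$) is slightly more careful than the paper's final line.
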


\begin{proof}
If $p_i^{(l)} \leq 0$ then $\sigma'(p_i) = 0$ and the bound holds trivially,
so assume $p_i^{(l)} > 0$ and $\sigma'(p_i) = 1$. Define
$\mathbf{s} = \mathrm{softmax}(f(\mathbf{x}))$ as the predicted class
distribution. The softmax–cross-entropy gradient with respect to the
logits is $\partial \ell / \partial f_c = s_c - y_c$. Using the column-wise expansion $f(\mathbf{x}) = \sum_{j=1}^{d_p} \mathbf{v}_j\,\sigma(p_j)$,
only the $j = i$ term depends on $p_i$, so
$\partial f / \partial p_i = \mathbf{v}_i$, and the chain rule gives
\begin{equation}
    \frac{\partial \ell}{\partial p_i}
    = \langle \mathbf{v}_i, \mathbf{s}\rangle - \langle \mathbf{v}_i, \mathbf{y}\rangle.
    \label{eq:ce_grad}
\end{equation}

\textbf{Label term.} Since $\mathbf{y}$ is independent of
$\mathbf{V}_{\mathrm{eff}}^{(l)}$ at initialization,
\begin{equation}
    \mathbb{E}\!\left[\langle\mathbf{v}_i, \mathbf{y}\rangle\right]
    = \langle \mathbb{E}[\mathbf{v}_i], \mathbf{y}\rangle = 0,
\end{equation}
by Theorem~\ref{th:veff_properties}, exactly as in the MSE case.

\textbf{Softmax term.} The simplex constraint $\mathbf{1}^\top \mathbf{s} = 1$
means $\mathbf{s}$ is not zero-mean, so we cannot apply
Theorem~\ref{th:veff_properties} directly. Decompose
$\mathbf{s} = \tfrac{1}{C}\mathbf{1} + \tilde{\mathbf{s}}$, where
$\tilde{\mathbf{s}} = \mathbf{P}\mathbf{s}$ is the centered softmax and
$\mathbf{P} = \mathbf{I} - \tfrac{1}{C}\mathbf{1}\mathbf{1}^\top$ is the
centering projection. The constant component vanishes:
$\mathbb{E}[\langle\mathbf{v}_i, \tfrac{1}{C}\mathbf{1}\rangle] = 0$,
leaving
\begin{equation}
    \mathbb{E}\!\left[\frac{\partial \ell}{\partial p_i}\right]
    = \mathbb{E}\!\left[\langle \mathbf{v}_i, \tilde{\mathbf{s}}\rangle\right].
\end{equation}

We linearize the softmax at the origin. Using
$\mathbf{J}_{\mathbf{s}}(\mathbf{0}) = \tfrac{1}{C}\,\mathbf{P}$,
\begin{equation}
    \tilde{\mathbf{s}} = \tfrac{1}{C}\,\mathbf{P}\mathbf{f}
        + O(\|\mathbf{f}\|^2),
    \label{eq:softmax_linear}
\end{equation}
and substituting $\mathbf{f} = \sum_j \mathbf{v}_j\,\sigma(p_j)$,
\begin{equation}
    \mathbb{E}\!\left[\langle \mathbf{v}_i, \tilde{\mathbf{s}}\rangle\right]
    = \frac{1}{C}\sum_{j=1}^{d_p} \sigma(p_j)\,
      \mathbb{E}\!\left[\mathbf{v}_i^\top \mathbf{P}\,\mathbf{v}_j\right]
      \;+\; O(\|\mathbf{f}\|^2).
    \label{eq:ce_main}
\end{equation}

\textbf{Evaluating the projected inner product.}
Expanding $\mathbf{P} = \mathbf{I} - \tfrac{1}{C}\mathbf{1}\mathbf{1}^\top$ gives
\begin{equation}
    \mathbb{E}\!\left[\mathbf{v}_i^\top \mathbf{P}\, \mathbf{v}_j\right]
    = \mathbb{E}\!\left[\langle \mathbf{v}_i, \mathbf{v}_j\rangle\right]
        - \tfrac{1}{C}\,\mathbb{E}\!\left[(\mathbf{v}_i^\top\mathbf{1})(\mathbf{v}_j^\top\mathbf{1})\right],
    \label{eq:proj_split}
\end{equation}
so it suffices to evaluate the second term. Write
$\mathbf{V}_{\mathrm{eff}}^{(l)} = \mathbf{W}_L\,M$,  where
$M = \mathbf{D}_{L-1}\mathbf{W}_{L-1}\cdots\mathbf{W}_{l+1}$ is independent
of $\mathbf{W}_L$ at initialization. The rows of $\mathbf{W}_L$ are i.i.d.\
zero-mean, so for any $c \neq c'$ and any $i, j$,
\begin{multline}
    \mathbb{E}\!\left[(\mathbf{v}_i)_c\,(\mathbf{v}_j)_{c'}\right] \\
    = \mathbb{E}_M\!\left[\mathbb{E}_{\mathbf{W}_L}\!\left[
        (\mathbf{W}_L)_{c,:} M_{:,i} \cdot (\mathbf{W}_L)_{c',:} M_{:,j}\,\big|\,M
      \right]\right] = 0,
    \label{eq:row_decorrelation}
\end{multline}
using independence and zero-mean of rows of $\mathbf{W}_L$. By the same
i.i.d.\ row structure, $\mathbb{E}[(\mathbf{v}_i)_c (\mathbf{v}_j)_c]$ is
constant in $c$, hence equals
$\tfrac{1}{C}\,\mathbb{E}[\langle\mathbf{v}_i, \mathbf{v}_j\rangle]$.
Therefore
\begin{multline}
    \mathbb{E}\!\left[(\mathbf{v}_i^\top\mathbf{1})(\mathbf{v}_j^\top\mathbf{1})\right]
    =  \sum_{c,c'} \mathbb{E}[(\mathbf{v}_i)_c(\mathbf{v}_j)_{c'}]
    = \\  \sum_c \mathbb{E}[(\mathbf{v}_i)_c(\mathbf{v}_j)_c]
    = \mathbb{E}[\langle\mathbf{v}_i, \mathbf{v}_j\rangle],
\end{multline}
and consequently
\begin{multline}
    \mathbb{E}\!\left[\mathbf{v}_i^\top \mathbf{P}\,\mathbf{v}_j\right]
    = \mathbb{E}[\langle\mathbf{v}_i, \mathbf{v}_j\rangle]
        - \tfrac{1}{C}\,\mathbb{E}\!\left[(\mathbf{v}_i^\top\mathbf{1})(\mathbf{v}_j^\top\mathbf{1})\right]
    = \\ \tfrac{C-1}{C}\,\mathbb{E}[\langle\mathbf{v}_i, \mathbf{v}_j\rangle]
    \;\geq\; 0,
    \label{eq:proj_corr}
\end{multline}
by Theorem~\ref{th:veff_properties}.

\textbf{Sign analysis.} Substituting~\eqref{eq:proj_corr}
into~\eqref{eq:ce_main},
\begin{equation}
    \mathbb{E}\!\left[\frac{\partial \ell}{\partial p_i}\right]
    = \frac{C-1}{C^2}\sum_{j=1}^{d_p}
        \mathbb{E}[\langle\mathbf{v}_i, \mathbf{v}_j\rangle]\,\sigma(p_j)
      \;+\; O(\|\mathbf{f}\|^2).
\end{equation}
To leading order, every summand is non-negative
(by Theorem~\ref{th:veff_properties} and ReLU non-negativity)
$\tfrac{C-1}{C^2}\,\mathbb{E}[\|\mathbf{v}_i\|^2]\,p_i^{(l)} \geq 0$.
\end{proof}

\begin{remark}
The CE bound parallels Theorem~\ref{th:pos_p_gradient} with two structural
differences. First, the centering projection $\mathbf{P}$ replaces the
identity, contributing the factor $(C-1)/C$ that reflects the simplex
constraint $\mathbf{1}^\top \mathbf{s} = 1$. Second, the overall
coefficient $1/C^2$ reflects the softmax derivative scale at the origin.
The leading non-vanishing correction to~\eqref{eq:softmax_linear} that
survives expectation is $O(\|\mathbf{f}\|^4)$: the cubic term in
$\mathbf{V}_{\mathrm{eff}}$ has zero expectation by sign-flip symmetry of
$\mathbf{W}_L$ (which leaves the gates $\mathbf{D}_l, \dots, \mathbf{D}_{L-1}$
invariant since they depend only on $\mathbf{W}_1, \dots, \mathbf{W}_{L-1}$).
The same negative-drift consequence then follows: the gradient with
respect to weights $W_{i,k}^{(l)}$ factorizes as
$(\partial\ell/\partial p_i)\,\sigma(p_k^{(l-1)})$, both factors
non-negative in expectation, so the expected weight update is
non-positive.
\end{remark}

\section{Additional Experiments on Weight Drift}

\label{app:additional_drift}

This section complements the empirical analysis of 
Section~\ref{sec:weight_drift} with three sets of additional 
experiments: the effect of Top-K sparsity on gradient bias and 
weight drift, cross-architecture validation of the drift phenomenon, 
and the stability of drift dynamics when normalization statistics 
are frozen via \emph{Accumulation Stop}.

\subsection{Effect of Top-K sparsity on weight drift}
Figure~\ref{fig:topk_drift} shows gradient mean and weight mean 
trajectories for a three-layer MLP with GELU activations trained 
under varying levels of Top-K sparsity on random data. As $K$ 
increases and more activations are retained, the positive gradient 
bias and corresponding negative weight drift become progressively 
more pronounced. At $K=0.10$, where only 10\% of 
activations are retained the gradient mean 
collapses toward zero across all layers and weight means remain 
nearly flat throughout training. This indicates that extreme 
sparsity starves the network of informative gradient signal 
too few activations survive to produce consistent updates 
resulting in severely slow convergence. This observation suggests 
a practical lower bound on usable sparsity.

\begin{figure*}[ht]
    \centering
    \includegraphics[width=1\linewidth]{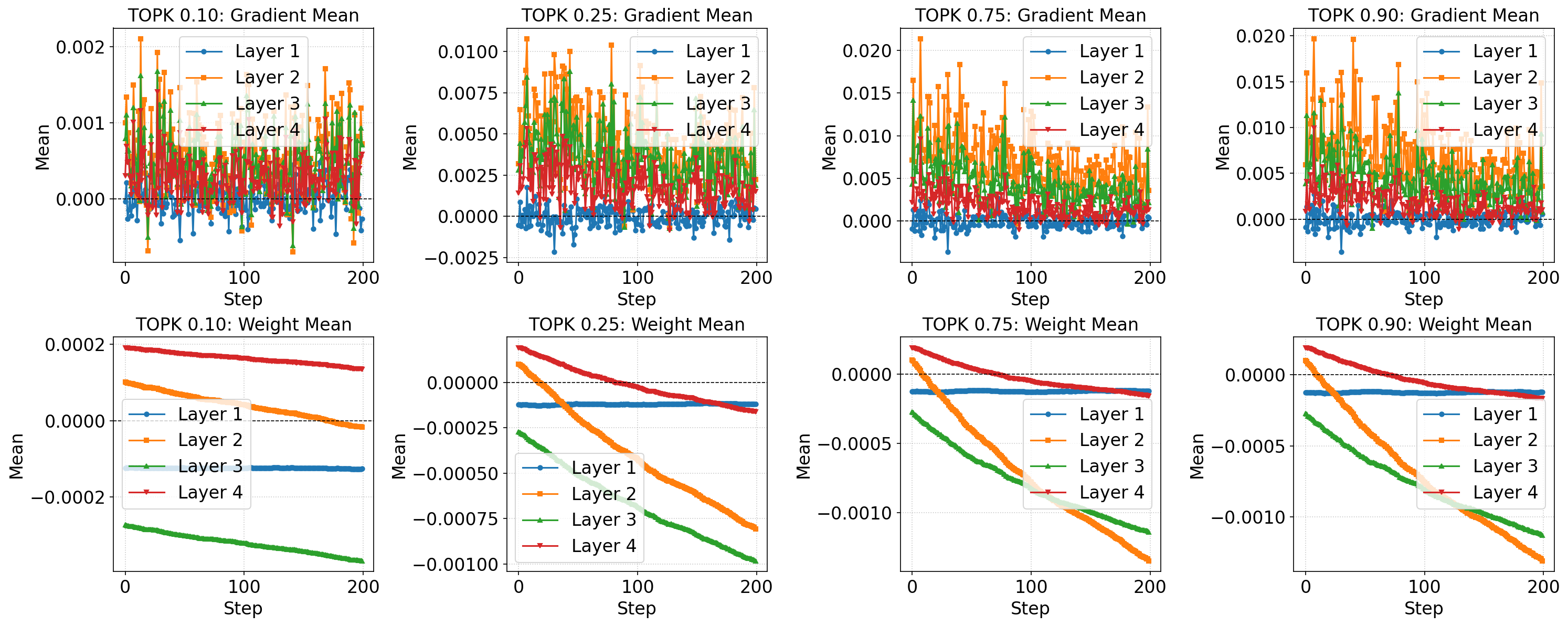}
    \caption{
        \textbf{Effect of Top-K sparsity on weight drift.}
        A three-layer MLP with GELU activations is trained on 
        random $\{X, Y\}$ pairs sampled from $\mathcal{N}(0,1)$ 
        under four Top-K retention levels ($K \in 
        \{0.10, 0.25, 0.75, 0.90\}$). Top row: gradient mean per 
        layer. Bottom row: weight mean per layer. Trajectories are 
        averaged across 20 runs (hidden dimension 128, SGD with 
        lr$=0.001$). As $K$ increases, gradient bias and weight 
        drift grow more pronounced. At $K=0.10$, gradient signal 
        collapses and weight means remain flat, indicating that 
        extreme sparsity prevents consistent weight updates.
    }
    \label{fig:topk_drift}
\end{figure*}
\subsection{Cross-architecture validation}
Figures~\ref{fig:resnet_drift} 
and~\ref{fig:senet} confirm that negative weight drift is not 
an artifact of the MLP setting but arises consistently across 
diverse architectures and optimizers. In ResNet-18 
(Figure~\ref{fig:resnet_drift}), trained with Adam at 
lr$=10^{-3}$, weight drift accumulates monotonically across 
GELU, ReLU, and SiLU, with the covariance term 
$\mathrm{Cov}(\partial\ell/\partial p, x)$ remaining orders 
of magnitude smaller than the weight mean. MaxViT-Tiny 
\citep{tu2022maxvit} (Figure~\ref{fig:maxvit}), trained with 
AdamW at lr$=3\times10^{-3}$, exhibits the same qualitative 
pattern: positive gradient bias during early training, 
monotonically accumulating negative weight drift, and rapid 
stabilization once gradient magnitudes diminish.

MP-SENet~\citep{lu2023mp} (Figure~\ref{fig:senet}), trained 
with AdamW at lr$=5\times10^{-4}$, presents a partial 
exception. While ReLU follows the expected pattern of negative 
weight drift, GELU and SiLU exhibit a positive drift in weight 
means despite showing positive gradients throughout training. 

Taken together, these results confirm that negative weight 
drift is a robust and broadly observed phenomenon, while 
acknowledging that architecture-specific factors can influence 
its magnitude and direction in certain settings.

\begin{figure*}[ht]
    \centering
    \includegraphics[width=0.8\linewidth]{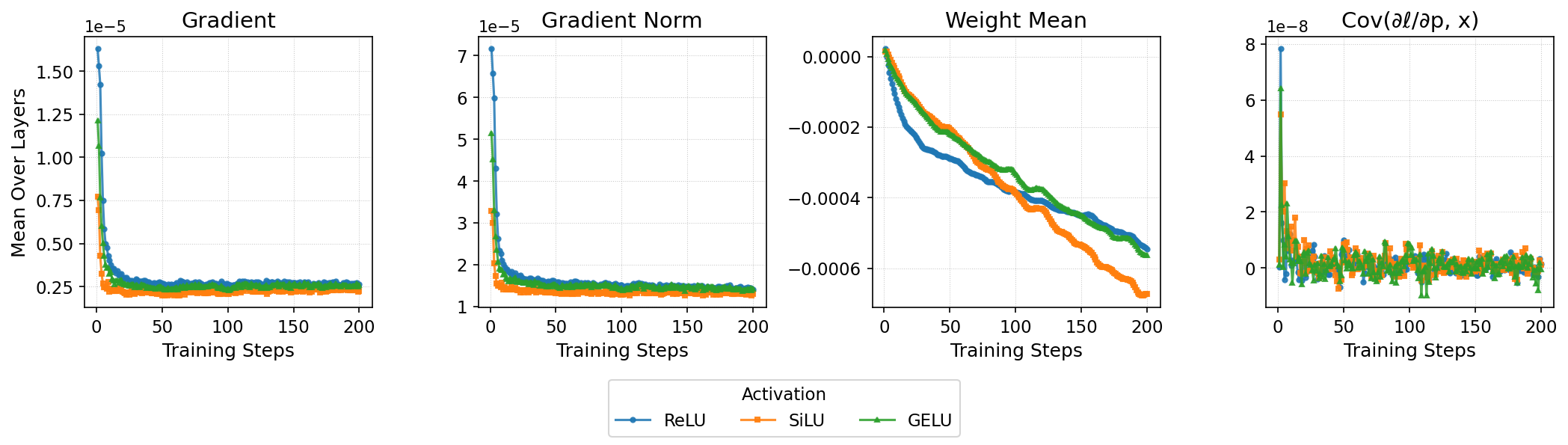}
    \caption{
        \textbf{Weight drift in ResNet-18.}
        Training dynamics under GELU, ReLU, and SiLU with Adam 
        optimizer (lr$=10^{-3}$). Four metrics are tracked: 
        (1) average gradient mean, (2) gradient norm, 
        (3) weight mean, and (4) covariance term 
        $\mathrm{Cov}(\partial\ell/\partial p,\, x)$.
    }
    \label{fig:resnet_drift}
\end{figure*}



\subsection{MLP with and without skip connections}
Figure~\ref{fig:mlp_training_dynamics} tracks four metrics over 
1000 training steps of an MLP trained on real data with Adam 
(lr$=10^{-3}$), comparing GELU, SiLU, ReLU, NoisyReLU, and 
SUGARBSiLU, both without (top) and with (bottom) a skip 
connection. No normalization layers were used in either 
configuration, so all observed dynamics arise purely from the 
interaction between the activation function and the optimizer.

\textit{Gradient magnitude and variance.}
All activations exhibit a sharp transient in both gradient 
magnitude and standard deviation during the first ${\sim}100$ 
steps, followed by rapid decay to a stable regime. NoisyReLU 
settles at a notably higher gradient variance than the other 
activations, reflecting the stochastic nature of its outputs, 
SUGARBSiLU exhibits persistently noisy gradients throughout 
training, which may hinder stable convergence. Adding a skip 
connection increases gradient magnitude across all activations, 
indicating that the residual path sustains stronger gradient 
flow to earlier layers.
\textit{Weight drift.}
The Z-score of weight magnitudes increases monotonically for 
all activations, confirming that negative weight drift 
accumulates continuously on real data, consistent with the 
random-data experiments of \S\ref{sec:formal_drift}. 
The drift trajectory exhibits a knee at approximately the same 
step as the gradient transient, drift accumulates rapidly in 
early training and then slows as the gradient bias diminishes. 
ReLU and NoisyReLU produce the most pronounced drift, while 
GELU and SiLU drift more slowly, reflecting their less extreme 
positive output bias. The skip connection does not qualitatively 
alter the drift trajectory, suggesting that weight drift is 
driven primarily by the activation bias rather than by gradient 
magnitude.

\textit{Sparsity.}
Activation sparsity increases organically throughout training 
without any explicit regularization. In the plain MLP, ReLU and 
NoisyReLU reach the highest sparse-neuron fractions 
(${\sim}0.6$), consistent with ReLU's hard thresholding at zero. As weights become more negative, pre-activations 
shift downward, causing more units to fall below the activation 
threshold. Adding a skip connection substantially disrupts this 
mechanism, with ReLU and NoisyReLU sparsity dropping from 
${\sim}0.6$ to ${\sim}0.4$, at the cost of denser, 
higher-magnitude gradient flow.

\begin{figure*}[t]
    \centering
    \includegraphics[width=0.95\linewidth]{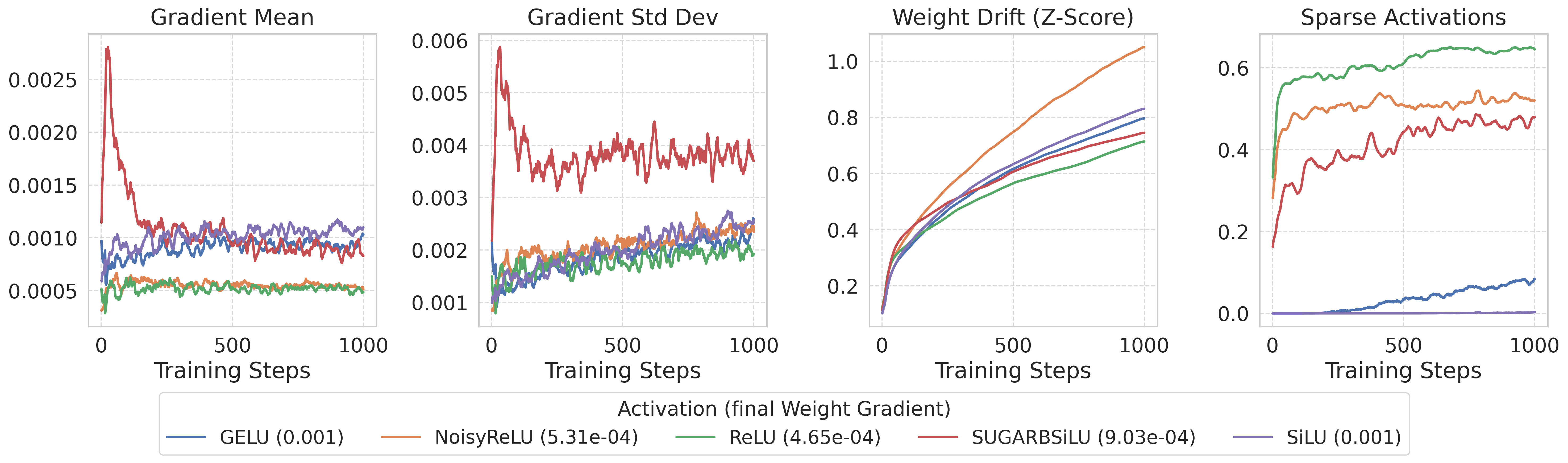}\\[6pt]
    \includegraphics[width=0.95\linewidth]{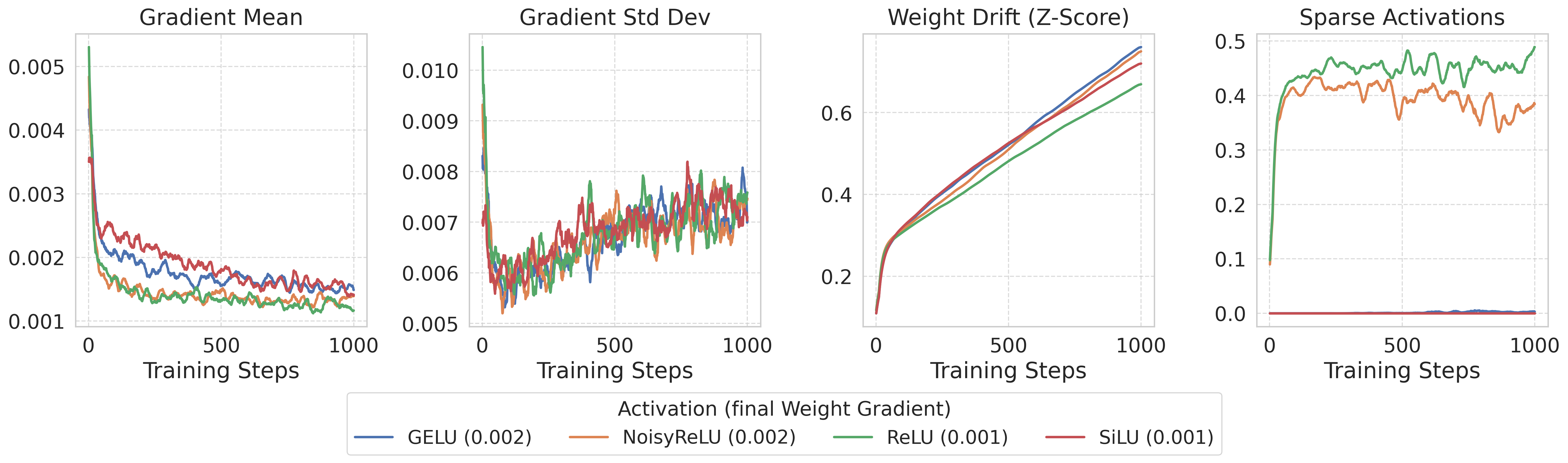}
    \caption{
        \textbf{Training dynamics of an MLP without (top) and 
        with (bottom) skip connection} 
        (see Appendix~\ref{sec:mlp_details} for architecture 
        details), under GELU, NoisyReLU, ReLU, SiLU, and 
        SUGARBSiLU with Adam optimizer (lr$=10^{-3}$). 
        Four metrics are tracked over 1000 training steps: 
        (1) average gradient magnitude, (2) gradient standard 
        deviation, (3) weight drift (Z-score), and (4) fraction 
        of sparse activations. In the plain MLP, ReLU and 
        NoisyReLU induce the highest sparsity (${\sim}60\%$) and 
        most pronounced drift; GELU and SiLU remain near-zero 
        sparsity. Adding a skip connection reduces sparsity to 
        ${\sim}40\%$ for ReLU and NoisyReLU while substantially 
        increasing gradient magnitudes across all activations. 
        Final average gradient magnitudes are reported in 
        parentheses in the legend.
    }
    \label{fig:mlp_training_dynamics}
\end{figure*}


\section{Extended Results on Controllable Sparsity}
\label{sec:controlled_sparsity}

Tables~\ref{tab:pruning_results_resized} and~\ref{tab:pruning_results_extended}
report accuracy (or validation loss for GPT), negative pre-activation
fractions, and post-activation sparsity across architectures and sparsity
levels under both Percentile Centering and Top-K pruning.

\paragraph{Percentile Centering and Top-K are consistent.}
Both methods produce closely matching sparsity levels and negative value
fractions at equivalent percentile targets, confirming their practical
equivalence and supporting the conclusion of
\S\ref{sec:sparsity_predictor} that sparsity level rather than
mechanism is the dominant predictor of performance.

\paragraph{Architecture determines sparsity tolerance.}
ResNet retains 94.3\% accuracy at 75\% per-activation sparsity and
degrades only to 90.4\% at 90\%, while structured (per-channel)
sparsity incurs a heavier penalty, dropping to 74.7\% at 95\% due
to the loss of entire feature channels. ViT accuracy varies by less
than 2\% across all evaluated levels under both methods, consistent
with attention-based aggregation being inherently tolerant to sparse
MLP activations. GPT validation loss remains nearly flat from 11\%
to 77\% sparsity (increasing by less than 0.002 nats) and degrades
only marginally to 3.286 at 90\%, suggesting the accuracy cliff for
transformer language models lies well above the range evaluated here.
By contrast, the plain MLP collapses to near-chance performance
(${\sim}10\%$) beyond 80\% sparsity, while adding skip connections
allows it to retain 32.7\% accuracy even at 90\%, confirming that
residual paths provide a bypass mechanism under aggressive
sparsification.

\paragraph{Negative value fractions confirm weight drift.}
Pre-activation negative fractions are substantial across all models
even before any sparsity is explicitly induced, corroborating the
drift findings of \S\ref{sec:formal_drift}. ViT exhibits
particularly high negative fractions (${\sim}0.77$--$0.86$) at
low sparsity levels, suggesting weight drift is especially
pronounced in attention-based architectures. The fact that
per-channel and per-activation ResNet results share similar
negative fractions at equivalent sparsity levels confirms that
the negative bias originates from weight drift rather than from
the sparsification mechanism.

\paragraph{Sparsification in generative models.}
Figure~\ref{fig:dit_generations} extends the sparsity analysis to
image generation with DiT-S/2~\citep{peebles2023dit} on
ImageNet. All configurations converge to a plausible overall
composition within 50K steps and improve progressively to 300K
steps. Differences emerge primarily in high-frequency detail.
The baseline ReLU configuration produces noticeably blurred
backgrounds and lacks fine-grained structure, while GELU with
Top-K sparsity consistently preserves sharp structural details
and textural realism. This suggests that Top-K masking of a
smooth non-monotonic activation such as GELU better protects
the high-frequency spatial gradients needed for clear visual
boundaries, offering a practical recommendation for
sparsity-aware generative model design.

\begin{figure*}[t]
    \centering
    \includegraphics[width=0.8\textwidth]{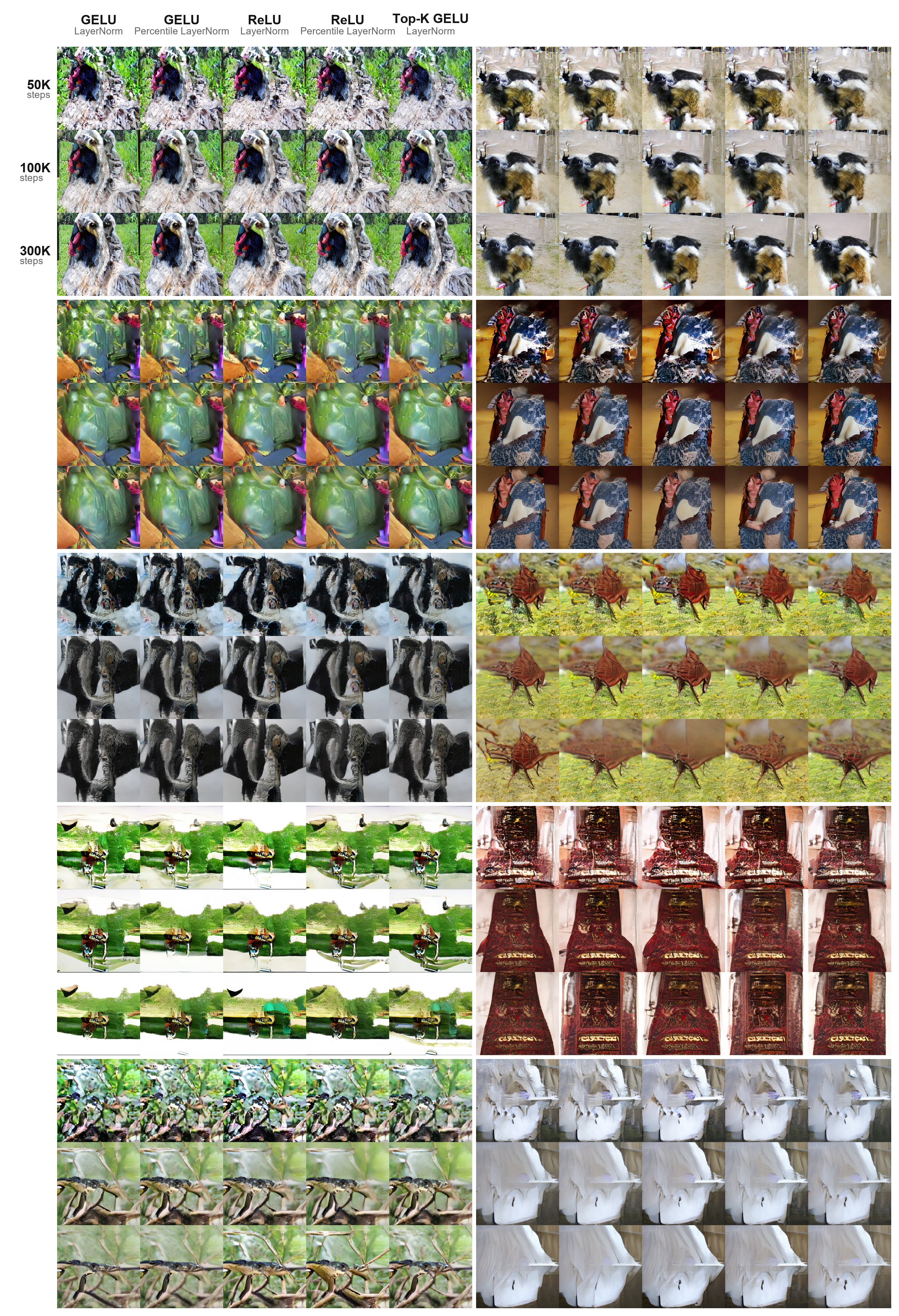}
    \caption{\textbf{DiT-S/2 generated samples across configurations
        at 50K, 100K, and 300K training steps.}
        Columns correspond to GELU + LayerNorm, GELU + Percentile
        LayerNorm, ReLU + LayerNorm, ReLU + Percentile LayerNorm,
        and Top-K GELU + LayerNorm. All configurations produce
        coherent compositions by 50K steps. High-frequency
        differences become apparent at later checkpoints: the
        baseline ReLU configuration produces blurred backgrounds
        and lacks fine detail, while Top-K GELU preserves complex
        structural features such as grasshopper limbs and
        environmental textures.}
    \label{fig:dit_generations}
\end{figure*}

\begin{table*}[h!]

\centering
\caption{Results when sparsity is induced with percentile shift or TOP-K. Negative values and sparsity   for GPT are computed only after down-projection and before up-projection layers correspondingly. Results are presented as: (\textbf{Accuracy or validation loss for GPT}) / \ (\textbf{Negative Values before activation function}) / (\textbf{Sparsity after activation function}).}
\resizebox{\textwidth}{!}{%
    \begin{tabular}{lccccc}
    \toprule
    \textbf{Model} & \textbf{P10} & \textbf{P25} & \textbf{P50} & \textbf{
    P75} & \textbf{P90} \\
    \cmidrule(lr){2-6}
    \multicolumn{6}{c}{\textit{Percentile Centering - \textbf{ReLU} is used to induce sparsity}} \\
    \midrule
    MLP & 45\% / 0.123 / 0.121 & 48.3\% / 0.273 / 0.272 & 47.0\% / 0.502 / 0.500 & 43.3\% / 0.747 / 0.750 & 24.6\% / 0.889 / 0.880 \\
    MLP + SKIP & 41.75 / 0.137 / 0.132 & 44.64 / 0.269 / 0.271 & 45.11 / 0.503 / 0.504 & 47.58 / 0.737 / 0.737 & 41.12 / 0.876 / 0.873 \\
    ResNet  & 90.4\% / 0.115 / 0.115 & 92.9\% / 0.215 / 0.214 & 94.3\% / 0.451 / 0.450 & 94.3\% / 0.687 / 0.687 & 91.8\% / 0.849 / 0.829 \\
    MaxViT& 68.95\% / 0.761 / 0.236 & 69.42\% / 0.770 / 0.770 & 69.74\% / 0.796 / 0.796 & 69.16\% / 0.833 / 0.833 & 68.83\% / 0.859 / 0.859 \\
    \midrule
    \multicolumn{6}{c}{} \\ 
    \midrule
    \multicolumn{6}{c}{\textit{TOP-K Pruning - \textbf{GELU} is used for all models}} \\
    \cmidrule(lr){2-6}
    \textbf{Model} & \textbf{K90} & \textbf{K75} & \textbf{K50} & \textbf{K25} & \textbf{K10} \\
    \midrule
    MLP  & 51.6\% / 0.610 / 0.110 & 49.0\% / 0.626 / 0.252 & 47.5\% / 0.661 / 0.500 & 41.1\% / 0.595 / 0.750 & 10.0\% / 0.500 / 0.966 \\
    MLP  + SKIP & 47.37 / 0.480 / 0.100 & 48.95 / 0.485 / 0.250 & 46.23 / 0.501 / 0.500 & 41.92 / 0.545 / 0.750 & 32.68 / 0.603 / 0.900 \\
    ResNet  & 94.6\% / 0.469 / 0.100 & 94.3\% / 0.471 / 0.250 & 93.8\% / 0.447 / 0.500 & 93.3\% / 0.436 / 0.750 & 90.35\% / 0.370 / 0.900 \\
    MaxViT& 70.33\% / 0.788 / 0.148 & 69.94\% / 0.771 / 0.308 & 70.01\% / 0.771 / 0.547 & 69.43\% / 0.772 / 0.798 & 67.55\% / 0.786 / 0.929 \\
     GPT  & 3.279 / 0.1106 / 0.1106 & 3.279 / 0.2689 / 0.2689 & 3.278 / 0.5231 / 0.5231 & 3.277 / 0.7692 / 0.7692 & 3.286 / 0.9087 / 0.9087 \\
    \bottomrule
    \end{tabular}%
} 

\label{tab:pruning_results_resized}
\end{table*}

\begin{table*}[h!]
\label{tab:sparse_2}
\centering
\caption{Extended results when sparsity is induced with Percentile 
Centering or TOP-K across additional sparsity levels. Results are 
presented as: (\textbf{Accuracy}) / (\textbf{Negative Values before 
activation function}) / (\textbf{Sparsity after activation function}).}
\resizebox{\textwidth}{!}{%
    \begin{tabular}{lcccccccc}
    \toprule
    \textbf{Model} & \textbf{Q10} & \textbf{Q25} & \textbf{Q50} & 
    \textbf{Q75} & \textbf{Q80} & \textbf{Q85} & \textbf{Q90} & 
    \textbf{Q95} \\
    \cmidrule(lr){2-9}
    \multicolumn{9}{c}{\textit{Percentile Centering with \textbf{ReLU} 
    is used to induce sparsity}} \\
    \midrule
    ResNet 
        & 90.4\% / 0.115 / 0.115 
        & 92.9\% / 0.215 / 0.214 
        & 94.3\% / 0.451 / 0.450 
        & 94.3\% / 0.687 / 0.687 
        & 92.6\% / 0.740 / 0.740 
        & 91.6\% / 0.791 / 0.791 
        & 91.8\% / 0.849 / 0.829 
        & 76.8\% / 0.912 / 0.912 \\
    \midrule
    \multicolumn{9}{c}{} \\
    \midrule
    \multicolumn{9}{c}{\textit{TOP-K Pruning -- \textbf{GELU} is 
    used for all models and BN is applied in ResNet-18}} \\
    \cmidrule(lr){2-9}
    \textbf{Model} & \textbf{K90} & \textbf{K75} & \textbf{K50} & 
    \textbf{K25} & \textbf{K20} & \textbf{K15} & \textbf{K10} & 
    \textbf{K5} \\
    \midrule
    ResNet 
        & 94.6\% / 0.469 / 0.100 
        & 94.3\% / 0.471 / 0.250 
        & 93.8\% / 0.447 / 0.500 
        & 93.3\% / 0.436 / 0.750 
        & 92.7\% / 0.406 / 0.800 
        & 92.3\% / 0.388 / 0.850 
        & 90.4\% / 0.370 / 0.900 
        & 86.5\% / 0.367 / 0.950 \\
    ResNet Struct. 
        & 94.6\% / 0.600 / 0.104 
        & 94.8\% / 0.578 / 0.250 
        & 92.1\% / 0.543 / 0.500 
        & 89.2\% / 0.511 / 0.750 
        & 88.4\% / 0.506 / 0.800 
        & 86.5\% / 0.519 / 0.850 
        & 81.3\% / 0.514 / 0.900 
        & 74.7\% / 0.512 / 0.950 \\
    MLP 
        & 51.6\% / 0.610 / 0.110 
        & 49.0\% / 0.626 / 0.252 
        & 47.5\% / 0.661 / 0.500 
        & 41.1\% / 0.595 / 0.750 
        & 39.7\% / 0.490 / 0.800 
        & 10.0\% / 0.550 / 0.950 
        & 10.0\% / 0.500 / 0.966 
        & 10.0\% / 0.501 / 0.983 \\
    MLP  + SKIP 
        & 47.4\% / 0.480 / 0.100 
        & 49.0\% / 0.250 / 0.250 
        & 46.2\% / 0.501 / 0.500 
        & 41.9\% / 0.545 / 0.750 
        & 40.4\% / 0.563 / 0.800 
        & 36.2\% / 0.587 / 0.850 
        & 32.7\% / 0.603 / 0.900 
        & 27.1\% / 0.576 / 0.950 \\
    \bottomrule
    \end{tabular}%
}
\label{tab:pruning_results_extended}
\end{table*}

\section{Extended Results for GPT-nano}
\label{app:gpt_extended}
Tables~\ref{tab:std}--\ref{tab:maxmin} collectively support the hypothesis that
activation spikes originate in the MLP block rather than in attention layers.
Weight standard deviations remain stable across all layer types, including in
spike layers L1--L4 , ruling out
weight instability as a cause. The down-projection consistently receives
large, one-sided inputs (max $\approx 39$--$42$, min $\approx 0$) in both the
full model and spike layers, a direct consequence of the activation function
zeroing negatives and amplifying large positives. By contrast, attention
layers exhibit symmetric, moderate input ranges, arguing against any causal
role of the attention mechanism or Softmax in spike formation.
\begin{table*}[ht]
\centering
\caption{Weight standard deviations averaged over 23 runs, for all layers and spike layers (L1--L4).
All layer types show lower and tighter STDs in spike layers, indicating weight stability
precisely where activation spikes occur.}
\label{tab:std}
\resizebox{0.7\linewidth}{!}{%
\begin{tabular}{lcccccc}
\toprule
 & \multicolumn{3}{c}{\textbf{All Layers}} & \multicolumn{3}{c}{\textbf{Layers 1--4}} \\
\cmidrule(lr){2-4} \cmidrule(lr){5-7}
\textbf{Layer Type} & \textbf{Avg} & \textbf{Min} & \textbf{Max} & \textbf{Avg} & \textbf{Min} & \textbf{Max} \\
\midrule
\texttt{attn.out\_proj}       & 0.1919 & 0.0208 & 0.3073 & 0.1668 & 0.0208 & 0.2349 \\
\texttt{mlp.up\_projection}   & 0.1616 & 0.0208 & 0.1819 & 0.1662 & 0.0208 & 0.1811 \\
\texttt{mlp.down\_projection} & 0.1583 & 0.0104 & 0.2017 & 0.1468 & 0.0104 & 0.1727 \\
\texttt{attn.qkv\_projection} & 0.1481 & 0.0208 & 0.2284 & 0.1353 & 0.0208 & 0.1797 \\
\bottomrule
\end{tabular}%
}
\end{table*}

\begin{table*}[ht]
\centering
\caption{Average maximum and minimum weight and input values over 23 runs, for all layers
and spike layers (L1--L4). The down-projection consistently receives large, one-sided inputs
(max $\approx$39--42, min $\approx$0), while attention layers exhibit symmetric and moderate ranges.}
\label{tab:maxmin}
\resizebox{\linewidth}{!}{%
\begin{tabular}{lcccccccc}
\toprule
 & \multicolumn{4}{c}{\textbf{All Layers}} & \multicolumn{4}{c}{\textbf{Layers 1--4}} \\
\cmidrule(lr){2-5} \cmidrule(lr){6-9}
\textbf{Layer Type} & \textbf{Max W} & \textbf{Min W} & \textbf{Max In} & \textbf{Min In}
                    & \textbf{Max W} & \textbf{Min W} & \textbf{Max In} & \textbf{Min In} \\
\midrule
\texttt{attn.out\_proj}       & $1.139$ & $-1.143$ & $19.33$ & $-19.23$
                              & $0.934$ & $-0.961$ & $15.60$ & $-15.67$ \\
\texttt{attn.qkv\_projection} & $1.189$ & $-1.185$ & $9.05$  & $-7.87$
                              & $0.930$ & $-0.925$ & $7.59$  & $-6.47$ \\
\texttt{mlp.down\_projection} & $\mathbf{2.858}$ & $\mathbf{-2.781}$ & $\mathbf{39.01}$ & ${\approx}0$
                              & $\mathbf{2.299}$ & $\mathbf{-1.957}$ & $\mathbf{42.00}$ & ${\approx}0$ \\
\texttt{mlp.up\_projection}   & $1.245$ & $-1.264$ & $7.47$  & $-6.65$
                              & $1.136$ & $-1.136$ & $5.30$  & $-4.86$ \\
\bottomrule
\end{tabular}%
}
\end{table*}

\section{Extended Results For Squared Activation Functions in ViT}
\label{app:squared_vision}
To verify that the observed effects are not specific to small GPT-like models,
we evaluate squared activations on a larger Vision Transformer (MaxViT).
Table~\ref{tab:maxvit_results} reports aggregated accuracy/sparsity metrics.
We observe the same qualitative behavior: squared activations improve accuracy over their base counterparts, while clipping stabilizes training and further improves performance. In particular, $\text{ReLU}^2_{\text{clip50}}$ and $\text{GELU}^2_{\text{clip50}}$ outperform their not clipped variants, confirming that extreme activation spikes are present. However, no clipped or squared version outperformed plain GELU, suggesting that  squared activation functions may be beneficial only for autoregressive models, or a different clipping threshold may be required.
\begin{figure}[H]
    \centering
    \includegraphics[width=\linewidth]{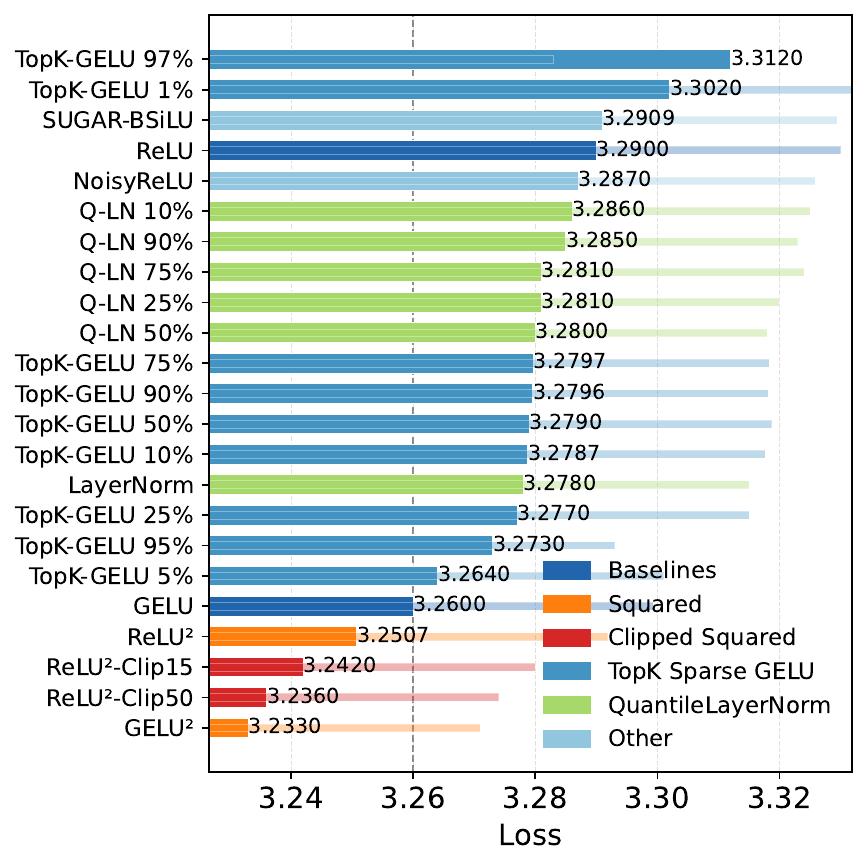}
    \caption{GPT-nano performance with different activation functions, sparsification and normalization approaches, transparent bars reflect train loss and solid lines reflect test loss.}
    \label{fig:gpt_results}
\end{figure}

\begin{figure}[H]
    \centering
    \includegraphics[width=\linewidth]{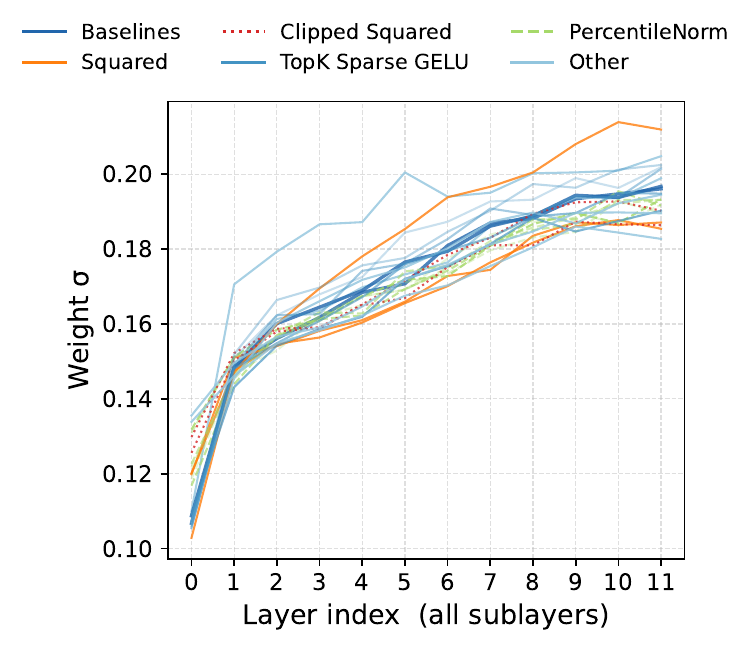}
    \caption{Weight standard deviation averaged across sublayers per layer index, aggregated over 23 runs for different activation functions, normalization and sparsification strategies. All runs show a consistent monotonic increase. Squared activation (orange) exhibiting slightly elevated values in later layers.}
    \label{fig:layer_wise_std}
\end{figure}
\begin{table*}[h]
\centering
\caption{Aggregated results on MaxViT (accuracy $\uparrow$ / sparsity).}
\label{tab:maxvit_results}
\resizebox{0.95\linewidth}{!}{%
\begin{tabular}{lcccccccc}
\toprule
 & ReLU & GELU & NoisyReLU & SUGARBSiLU & ReLU$^2$ & ReLU$^2_{\text{clip50}}$ & GELU$^2$ & GELU$^2_{\text{clip50}}$ \\
\midrule
Acc / Sparse &
69.57 / 0.798 &
70.30 / 0.010 &
68.46 / 0.172 &
51.49 / 0.829 &
62.38 / 0.756 &
69.25 / 0.669 &
60.65 / 0.017 &
69.63 / 0.003 \\
\bottomrule
\end{tabular}
}
\end{table*}

\section{Technical Details}
\label{app:technical_details}

This appendix provides full implementation details for all experiments
in the paper. \S\ref{sec:mlp_details} covers MLP experiments,
\S\ref{sec:resnet_details} ResNet-18, \S\ref{sec:dit_details}
DiT-S/2, \S\ref{sec:maxvit_details} MaxViT-T, and
\S\ref{app:gpt_details} GPT-nano. Shared dataset configurations
are described in \S\ref{sec:common_setup}. Activation statistics
measurement protocols are defined in \S\ref{app:statistics}.

\subsection{Shared Dataset and Training Configuration}
\label{sec:common_setup}

\paragraph{CIFAR-10.}
Unless otherwise specified, classification experiments use CIFAR-10
with per-channel normalization
$(\mu=[0.4914, 0.4822, 0.4465],\,\sigma=[0.2470, 0.2435, 0.2616])$,
a training batch size of 128, and evaluation on an unaugmented test
set with batch size 100. All linear and convolutional layers omit
bias terms.

\paragraph{ImageNet-1K.}
Generative and large-scale classification experiments use
ImageNet-1K. Images are center-cropped and resized to
$256\times256$ (DiT) or $224\times224$ (MaxViT) with standard
per-channel normalization.

\paragraph{Activation statistics.}
\label{app:statistics}
Two statistics are measured on a fixed test batch throughout all
experiments:
\begin{itemize}
    \item \textbf{Negative percentage}: fraction of pre-activation
    values (linear layer outputs) that are negative,
    $\text{NegPct}_l = \frac{1}{N_l}\sum_i \mathbb{I}(x_{l,i} < 0)$.
    \item \textbf{Sparsity}: fraction of post-activation outputs
    with magnitude below $\epsilon = 10^{-7}$, averaged across
    the full test set.
\end{itemize}

\paragraph{Normalization variants.}
Where normalization is varied, three strategies are evaluated in
pre-norm configuration (Norm $\to$ Linear $\to$ Act):
\begin{itemize}
    \item \textbf{No normalization}: baseline without any
    normalization.
    \item \textbf{RMSNorm}~\citep{zhang2019root}: normalizes by
    root mean square without centering,
    $\text{RMSNorm}(x) = x\,/\,\sqrt{\frac{1}{D}\sum_i x_i^2
    + \epsilon} \odot w$, where $w \in \mathbb{R}^D$ is a learned
    scale and $\epsilon = 10^{-6}$.
    \item \textbf{LayerNorm}~\citep{ba2016layer}: centers and
    scales activations,
    $\text{LayerNorm}(x) = (x - \mu_x)\,/\,\sqrt{\sigma_x^2
    + \epsilon} \odot w + b$, with learned affine parameters
    $w, b \in \mathbb{R}^D$.
\end{itemize}
Each normalization variant is evaluated with six activation
functions: ReLU, GELU, SiLU, ReLU$^2$, NoisyReLU, and
SUGARBSiLU. Higher-order and clipped variants (GELU$^2$,
ReLU$^2_{\text{clip15}}$, ReLU$^2_{\text{clip50}}$) are
additionally evaluated in GPT-nano experiments
(\S\ref{app:gpt_details}).

\subsection{MLP Experiments}
\label{sec:mlp_details}

We use a 5-layer fully connected MLP without normalization layers:
an input projection followed by 5 repeated blocks of the form
$\text{Linear}(d,d) \to [\text{Act}]$, and a final linear layer.

\paragraph{CIFAR-10 (cross-entropy).}
Input dimension 3072, hidden dimension $d=1024$, output dimension
10. Trained for 5 epochs with Adam ($\text{lr}=10^{-3}$, weight
decay $10^{-4}$). Trajectories averaged across 5 runs
(Figure~\ref{fig:mlp_ce_cov}).

\paragraph{Random data (MSE).}
To isolate the effect of activation functions on weight drift
independently of the data distribution, models are trained on
random $\{X,Y\}$ pairs sampled from $\mathcal{N}(0,1)$ with
MSE loss. Hidden dimension $d=128$, trained for 5 epochs with
SGD ($\text{lr}=0.01$). Trajectories averaged across 10 runs
(Figure~\ref{fig:mlp_weight_drift}).

\subsection{ResNet-18 Experiments}
\label{sec:resnet_details}

All ResNet-18 experiments use the standard architecture with
BasicBlock units and a 64--128--256--512 channel progression,
trained on CIFAR-10 following Section~\ref{sec:common_setup}.

\paragraph{Activation function comparison.}
Six activation functions (ReLU, GELU, SiLU, ReLU$^2$, NoisyReLU,
SUGARBSiLU) are evaluated with standard Batch Normalization.
Training uses SGD with Nesterov momentum (0.9), weight decay
$5\times10^{-4}$, and a learning rate schedule of 5-epoch linear
warmup (from $10^{-4}$ to $10^{-3}$) followed by cosine annealing.

\paragraph{Top-K sparsified GELU.}
Explicit sparsity is applied via a Top-K mask after GELU:

\begin{equation*}
\begin{aligned}
    \text{TopK-GELU}(x) &= \text{GELU}(x) \odot M, \\
    M_{i,j} &= \mathbb{I}\!\left(|\text{GELU}(x)_{i,j}|
    \in \mathrm{top}_K(|\text{GELU}(x)|)\right).
\end{aligned}
\end{equation*}

Five retention levels are evaluated: $K \in
\{0.01, 0.05, 0.10, 0.15, 0.20\}$. Training uses SGD with
Nesterov momentum (0.9), weight decay $5\times10^{-4}$, initial
learning rate $5\times10^{-2}$, and 5-epoch linear warmup
followed by cosine annealing, for 50 epochs.

\subsection{DiT-S/2 Configuration}
\label{sec:dit_details}

\paragraph{Architecture.}
We adopt DiT-Small (DiT-S/2)~\citep{peebles2023dit},
consisting of 12 transformer blocks ($D=384$, 6 heads) with
adaptive LayerNorm (adaLN-Zero) for timestep and class
conditioning. All linear layers omit bias terms.

\paragraph{VAE latent space.}
The VAE encoder from Stable Diffusion v1.5 projects
$256\times256\times3$ RGB images into a $32\times32\times4$
latent space. Latents are scaled by $0.18215$ during training
and generation.

\paragraph{Diffusion training and sampling.}
The model is trained for ${\sim}300$K steps (batch size 256)
to minimize the $\ell_2$ denoising objective:
\begin{equation}
    \mathcal{L} = \mathbb{E}_{t,x_0,\epsilon}
    \left[\left\|\hat{\epsilon}_\theta(x_t,t,y)
    - \epsilon\right\|_2^2\right],
\end{equation}
where $t \sim \text{Uniform}(0,249)$ and noise is added via a
squared cosine schedule. Generation uses the EMA model with
Classifier-Free Guidance ($s=1.5$) and 250 sampling steps.

\paragraph{Accumulation Stop for normalization statistics.}
To decouple training dynamics from fluctuating batch statistics
while maintaining efficiency, we implement an Accumulation Stop
(AS) mechanism using an Exponential Moving Average (EMA):
\begin{equation}
    \bar{v}_i = \gamma\bar{v}_{i-1} + (1-\gamma)v_i,
    \quad \gamma = 0.9999.
\end{equation}
After $K=50{,}000$ steps, the running statistic $\bar{v}_K$ is
frozen for all subsequent iterations. Note that Percentile
BatchNorm with AS effectively functions as static Percentile
Batch Symmetry after step $K$, analogously to standard BatchNorm
with AS transitioning to static normalization.

\subsection{MaxViT-T Configuration}
\label{sec:maxvit_details}

\paragraph{Architecture and training.}
MaxViT-T~\citep{tu2022maxvit} uses a 2--2--5--2 block structure
with interleaved local and global attention. Training runs for
60 epochs with AdamW ($\text{lr}=6\times10^{-4}$, weight decay
0.1) under a OneCycleLR schedule. Data augmentation follows
the original RandAugment protocol ($N=2$, $M=9$).

\paragraph{Latency benchmarking protocol.}
To isolate the per-forward-pass overhead of different activation
and normalization configurations, we employ a four-phase
measurement protocol (100 steps each):
\begin{enumerate}
    \item \textbf{Warm-up}: GPU synchronization and cache
    stabilization.
    \item \textbf{Active Train}: standard training with dynamic
    normalization statistic updates.
    \item \textbf{Converged Train}: training with Accumulation
    Stop (frozen EMA statistics).
    \item \textbf{Inference}: evaluation mode, no gradients.
\end{enumerate}
All throughput figures are measured in bfloat16 mixed precision
with statistics computed channel-wise and averaged over the batch.
The baseline uses PyTorch's optimized kernel; our AS
implementation uses plain PyTorch, so reported throughput
recovery represents a conservative lower bound.

\subsection{GPT-nano Training Details}
\label{app:gpt_details}

\paragraph{Architecture.}
We train a GPT-nano model (124M parameters) based on the
\texttt{nanoGPT} implementation~\citep{karpathy2022nanogpt},
with 12 layers, 12 attention heads, embedding dimension 768,
and vocabulary size 50,257.

\paragraph{Training configuration.}
The model is trained on FineWeb using the Muon optimizer for
linear layers and Adam for all other parameters. Key
hyperparameters: batch size 16, sequence length 2048, 16
gradient accumulation steps (totaling 7,168 effective
iterations), zero weight decay, optimizer betas $[0.9, 0.95]$.
The learning rate schedule consists of 2,000 warmup and 2,028
warmdown iterations.

\paragraph{Evaluation.}
Validation is performed every 1,024 iterations on
1,048,576 tokens with a validation batch size of 32.
Training time per model is approximately 8 GPU hours.
A total of 23 configurations are evaluated, covering
activation functions (ReLU, GELU, NoisyReLU, SUGARBSiLU,
ReLU$^2$, GELU$^2$, ReLU$^2_{\text{clip15}}$,
ReLU$^2_{\text{clip50}}$), sparsification strategies
(Top-K GELU at multiple retention levels), and normalization
variants (LayerNorm, QuantileLayerNorm at multiple percentiles).

\subsection{Activation Function Implementations}
\label{app:activation_implementations}

All custom activation functions are implemented as \texttt{nn.Module}
subclasses and are drop-in replacements for standard PyTorch
activations. We describe each below alongside the key implementation
decisions.

\paragraph{ReLU$^2$.}
ReLU$^2$ is defined as $f(x) = \max(0,x)^2$ and is implemented as
a simple composition of ReLU and a squaring operation:

\begin{lstlisting}[language=Python]
class ReLUSquared(nn.Module):
    def forward(self, x):
        return F.relu(x).square()
\end{lstlisting}

\noindent No learnable parameters are introduced. The squaring is
applied \emph{after} the ReLU threshold, so the zero-output region
is preserved exactly and sparsity behavior is identical to ReLU at
the activation boundary.

\paragraph{NoisyReLU.}
NoisyReLU~\citep{gulcehre2016noisy} addresses the dying neuron
problem by injecting input-dependent noise into negative
pre-activations during training, while reverting to standard ReLU
at inference to preserve sparsity. Formally:
\begin{equation}
    f(x) = \alpha \cdot \text{ReLU}(x) + (1-\alpha)\cdot x
    + \epsilon(x),
\end{equation}
where $\alpha \in [0,1]$ interpolates between ReLU and the identity,
and the noise term is:
\begin{align*}
    \epsilon(x) &= \mathbb{I}[x \leq 0]\cdot\sigma\cdot|\mathcal{N}(0,1)|, \\
    \sigma &= c\cdot\left(\text{sigmoid}(v\cdot\delta) - 0.5\right)^2.
\end{align*}
with $\delta = \max(0,-x)$ the magnitude of the negative input and
$v$ a learnable scalar parameter. The noise magnitude $\sigma$ is
therefore input-dependent and learned, vanishing as $x \to 0^-$ and
growing for more negative inputs. At test time the function reduces
exactly to ReLU:

\lstset{
  basicstyle=\ttfamily\footnotesize,
  breaklines=true,
  breakatwhitespace=false,
  columns=fullflexible,
  keepspaces=true,
}

\begin{lstlisting}[language=Python]
class NoisyReLU(nn.Module):
    def __init__(self, alpha=1.0, c=1.0):
        super().__init__()
        self.alpha = alpha
        self.c = c
        self.p = nn.Parameter(torch.randn(1))
    def forward(self, x):
        if not self.training:
            return F.relu(x)
        delta = torch.where(
            x < 0, -x, torch.zeros_like(x))
        sigma = self.c * (
            torch.sigmoid(self.p * delta) - 0.5
        ).square()
        noise = torch.where(
            x < 0,
            sigma * torch.abs(torch.randn_like(x)),
            torch.zeros_like(x))
        return (self.alpha * F.relu(x)
                + (1 - self.alpha) * x + noise)
\end{lstlisting}

\noindent In all experiments we use $\alpha=1.0$ and $c=1.0$ with
half-normal noise, i.e.\ $\epsilon = |\mathcal{N}(0,1)|$, so the
identity branch vanishes and only the noise term acts on negative
pre-activations.

\paragraph{SUGARBSiLU.}
SUGARBSiLU~\citep{horuz2025resurrection} is a surrogate gradient method
that decouples the forward and backward passes. The forward pass
applies standard ReLU, preserving hard sparsity at every step,
while the backward pass substitutes the zero gradient of dead
neurons with the derivative of B-SiLU --- a smooth surrogate that
maintains non-zero gradient signal for negative pre-activations:
\begin{align}
    f_{\text{fwd}}(x) &= \text{ReLU}(x), \\
    \left.\frac{\partial f}{\partial x}\right|_{\text{bwd}}
    &= \sigma(x) + (x+\alpha)\cdot\sigma(x)(1-\sigma(x)),
\end{align}
where $\sigma(x) = (1+e^{-x})^{-1}$ is the sigmoid function and
$\alpha = 1.67$ is a fixed shift parameter. This is implemented via
a custom \texttt{torch.autograd.Function} that applies ReLU in
\texttt{forward()} and the B-SiLU derivative in \texttt{backward()},
requiring no changes to the model architecture or inference pipeline:

\begin{lstlisting}[language=Python]
ALPHA = 1.67

class SUGARBSiLUFunction(torch.autograd.Function):
    @staticmethod
    def forward(ctx, x):
        ctx.save_for_backward(x)
        return F.relu(x)

    @staticmethod
    def backward(ctx, grad_output):
        x, = ctx.saved_tensors
        sigma = torch.sigmoid(x)
        surrogate = sigma + (x + ALPHA) * sigma * (1.0 - sigma)
        return grad_output * surrogate

class SUGARBSiLU(nn.Module):
    def forward(self, x):
        return SUGARBSiLUFunction.apply(x)
\end{lstlisting}

\noindent The surrogate gradient is strictly positive for all $x$,
ensuring that no neuron becomes permanently dead during training
regardless of how negative its pre-activation becomes. At inference,
the \texttt{forward()} path is used exclusively, so the model
produces the same sparse activations as a standard ReLU network.

\section{Extended Limitations and Discussion}
\label{app:discussion}

This appendix expands on the limitations and open questions summarized in
\S\ref{sec:limitations}. We address each of the eight points in turn,
following the numbering used in the main text.

\paragraph{(1) Theoretical assumptions hold strictly only at initialization.}
Theorems~\ref{th:veff_properties}, \ref{th:pos_p_gradient}, and
\ref{th:pos_p_gradient_ce} rely on three conditions: zero-mean i.i.d.\ weights,
independence between $V^{(l)}_{\mathrm{eff}}$ and the pre-activations $p^{(l)}$,
and (in the cross-entropy case) softmax linearization around $f = 0$ with
$\mathcal{O}(\|f\|^2)$ corrections. As training progresses, weights deviate
from the i.i.d.\ regime, downstream layers become correlated with upstream
activations through shared gradient updates, and the linearization error grows
as $\|f\|$ moves away from zero. We argue empirically that the drift is largest
precisely during the first iterations when these assumptions are best
satisfied, and that the resulting negative offset persists once the gradient
bias diminishes. The same argument shows that violations of the assumptions
weaken the bound quantitatively but do not flip its sign in the early phase
where drift accumulates fastest.

\paragraph{(2) Formal proof covers ReLU only.}
The argument in Theorem~\ref{th:veff_properties} relies on the
survival-conditioning property of ReLU gates: each $D_l$ is a binary diagonal
matrix that selects active neurons, and conditioning on survival induces a
positive correlation along the input direction. Smooth activations such as GELU
and SiLU do not admit a clean binary gating decomposition, so extending the
proof requires a continuous analogue of survival conditioning, which is beyond
the scope of this paper. Empirically, the same drift pattern is observed across
GELU, SiLU, NoisyReLU, and SUGARBSiLU except one model.


\paragraph{(3) Empirical scope and clipping thresholds.}
Classification experiments are conducted on CIFAR-10 and ImageNet-1K, and
language modeling uses a single dataset (FineWeb) with a small autoregressive
model (GPT-nano, 124M parameters). Whether the $\sim$70\% sparsity cliff and
the benefit of clipped squared activations transfer unchanged to
frontier-scale LLMs (1B+ parameters, multi-trillion-token training) remains
open. The cliff itself is fitted from $N{=}79$ configurations and is consistent
across the architectures we tested, but a denser sweep at intermediate scales
would help characterize how the cliff position depends on capacity, depth, and
training duration. Clipping thresholds (15 and 50) for ReLU\textsuperscript{2}
were selected empirically on GPT-nano based on the spike magnitudes observed. The ViT results suggest that the optimal threshold might be
architecture-dependent, a fixed numerical threshold tuned on one architecture
does not transfer directly. Adaptive or layer-wise clipping schedules, possibly
tied to running activation statistics, are a natural extension we did not
explore.

\paragraph{(4) Throughput gains are conservative lower bounds.}
Accumulation Stop is evaluated on architectures with fixed-size inputs
(DiT-S/2, MaxViT), where the warm-up statistics provide a stable and
representative summary of the full training distribution. Its applicability to
autoregressive models is more limited. Reported
throughput in Table~\ref{tab:throughput} compares a naive PyTorch
implementation of percentile centering against PyTorch's optimized
LayerNorm/BatchNorm kernels. Because our implementation is unoptimized while
the baseline is fused, the recovery to near-baseline throughput already
represents a conservative lower bound. A fused CUDA kernel for percentile
centering, analogous to existing fused LayerNorm kernels, would likely yield
further improvement and could make \textbf{Accumulation Stop a net throughput win} even
during the warm-up phase.

\paragraph{(5) Transformer robustness to sparsification.}
ViT and GPT-nano tolerate sparsity levels far beyond the cliff observed in
MLPs, with GPT-nano validation loss remaining nearly flat up to $s \approx
0.91$. Skip connections improve sparsity tolerance in MLPs (from collapse at
85\% to 32.7\% accuracy retention at 90\%), but this gap alone does not
account for the much greater robustness of attention-based architectures.
Several factors likely contribute, including attention as a content-based
aggregation mechanism that can route around sparsified MLP outputs,
overparameterization of the feed-forward blocks relative to the attention
pathway, and the residual structure that makes the MLP block an additive
correction rather than a serial bottleneck. Disentangling these factors
requires controlled ablations beyond what this paper covers.

\paragraph{(6) DiT-S/2 improvements with Percentile LayerNorm.}
On DiT-S/2, GELU with Percentile LayerNorm at the 50th percentile achieves
FID 48.21 and IS 31.41, both improving over the centered LayerNorm baseline
(FID 49.40, IS 29.85). We do not characterize this effect mechanistically. There is one
possible explanation -  gradient flow via quantile and mean are entirely different, in the case of quintile gradient flows only through one quantile value.

\end{document}